\documentclass{article}

\usepackage{microtype}
\usepackage{graphicx}
\usepackage{subcaption}
\usepackage{booktabs} 
\usepackage[font=small]{caption}
\usepackage{hyperref}
\makeatletter
\@namedef{ver@algorithmic.sty}{2006/06/02 v1.99 stubbed-out by user}
\makeatother
\usepackage{algorithm}
\usepackage{algpseudocode}

\usepackage[accepted]{icml2026}

\usepackage{amsmath}
\usepackage{amssymb}
\usepackage{mathtools}
\usepackage{amsthm}
\usepackage{enumitem}
\usepackage{float}

\usepackage[capitalize,noabbrev]{cleveref}

\theoremstyle{plain}
\newtheorem{theorem}{Theorem}[section]
\newtheorem{proposition}[theorem]{Proposition}

\theoremstyle{definition}

\theoremstyle{remark}
\newtheorem{remark}[theorem]{Remark}

\usepackage[textsize=tiny]{todonotes}

\icmltitlerunning{Hermite-NGP: Gradient-Augmented Hash Encoding for Learning PDEs}

\begin{document}

\twocolumn[
  \icmltitle{Hermite-NGP: Gradient-Augmented Hash Encoding for Learning PDEs}



  \icmlsetsymbol{equal}{*}

  \begin{icmlauthorlist}
    \icmlauthor{Jinjin He}{gt}
    \icmlauthor{Zhiqi Li}{gt}
    \icmlauthor{Sinan Wang}{gt}
    \icmlauthor{Bo Zhu}{gt}
  \end{icmlauthorlist}

  \icmlaffiliation{gt}{Georgia Institute of Technology, Atlanta, GA, USA}

  \icmlcorrespondingauthor{Jinjin He}{jhe433@gatech.edu}

  \icmlkeywords{Physics-Informed Neural Networks, Hash Encoding, Hermite Interpolation, Partial Differential Equations}

  \vskip 0.3in
]



\printAffiliationsAndNotice{}  

\begin{abstract}
We propose \textit{Hermite-NGP}, a gradient-augmented multi-resolution hash encoding designed to enable fast and accurate computation of spatial derivatives for neural PDE solvers. Unlike existing NGP-based approaches that rely on automatic differentiation or finite differences and suffer from instability or high cost, Hermite-NGP explicitly stores function values and mixed partial derivatives at hash grid vertices, allowing fully analytic evaluation of gradients, Jacobians, and Hessians via Hermite interpolation. This design preserves the efficiency and spatial adaptivity of NGP while supporting analytic differential operators up to second order. We further introduce a multi-resolution curriculum training strategy analogous to multigrid V-cycles to enable coarse-to-fine optimization. Across a range of 2D and 3D PDE benchmarks, Hermite-NGP achieves up to ${\sim}20{\times}$ lower error than prior neural PDE methods, and reduces wall-clock convergence time by $2$--$10\times$ compared to other solvers, with per-epoch training times as low as $3.5\,\mathrm{ms}$ for models with up to $17$M parameters.

\end{abstract}

\section{Introduction}
Efficient neural scene representations that combine classical data structures (grids, tensors, Gaussians) with neural counterparts have emerged as a highly effective family of spatial representations for neural field learning, offering strong locality, spatial adaptivity, and instant training in high-dimensional function approximation. Multi-resolution hash tables, popularized by Instant Neural Graphics Primitives (I-NGP)~\cite{muller2022instant}, are one such design; related approaches include grid- and tensor-decomposition representations~\cite{fridovich2022plenoxels, fridovich2023k, chen2022tensorf, cao2023hexplane, kim2024neuralvdb, chen2025neural, zou2024triplane}, anti-aliased NeRF variants~\cite{barron2023zip}, and Gaussian-based methods such as 3DGS~\cite{kerbl2023gaussian}. These computational merits make NGP-style encodings particularly attractive for representation learning problems that demand both accuracy and efficiency, such as neural radiance field, signed distance field, image reconstruction, and so on ~\cite{chen2023mobilenerf, hu2023tri, chen2023neurbf, li2023neuralangelo, yu2022monosdf, wang2023neus2, muller2022instant}.

However, despite their success in representing graphics primitives, NGP-style encodings prove inadequate for learning PDE-governed functions, especially in the context of physics-informed neural networks (PINNs) that rely on accurate differential constraints ~\cite{raissi2020hidden, karniadakis2021physics, raissi2019physics,raissi2017machine}. \textit{The core challenge lies in I-NGP's inherent inability to compute spatial derivatives efficiently}: standard hash encodings do not support evaluation of first- and second-order derivatives via automatic differentiation, while finite-difference approximations incur prohibitive computational cost and are highly sensitive to the choice of discretization parameters. As a result, naive applications of NGP to neural PDE settings often face challenges in addressing differentiation (see ~\cite{huang2024efficient, li2023neuralangelo, chetan2025accurate, wang2024neural} for examples). This mismatch constitutes a fundamental barrier to leveraging the efficiency of NGP-style representations for solving PDE learning problems.
\begin{figure}[t]
\centering
\captionsetup{belowskip=-16pt}
\includegraphics[width=0.95\linewidth]{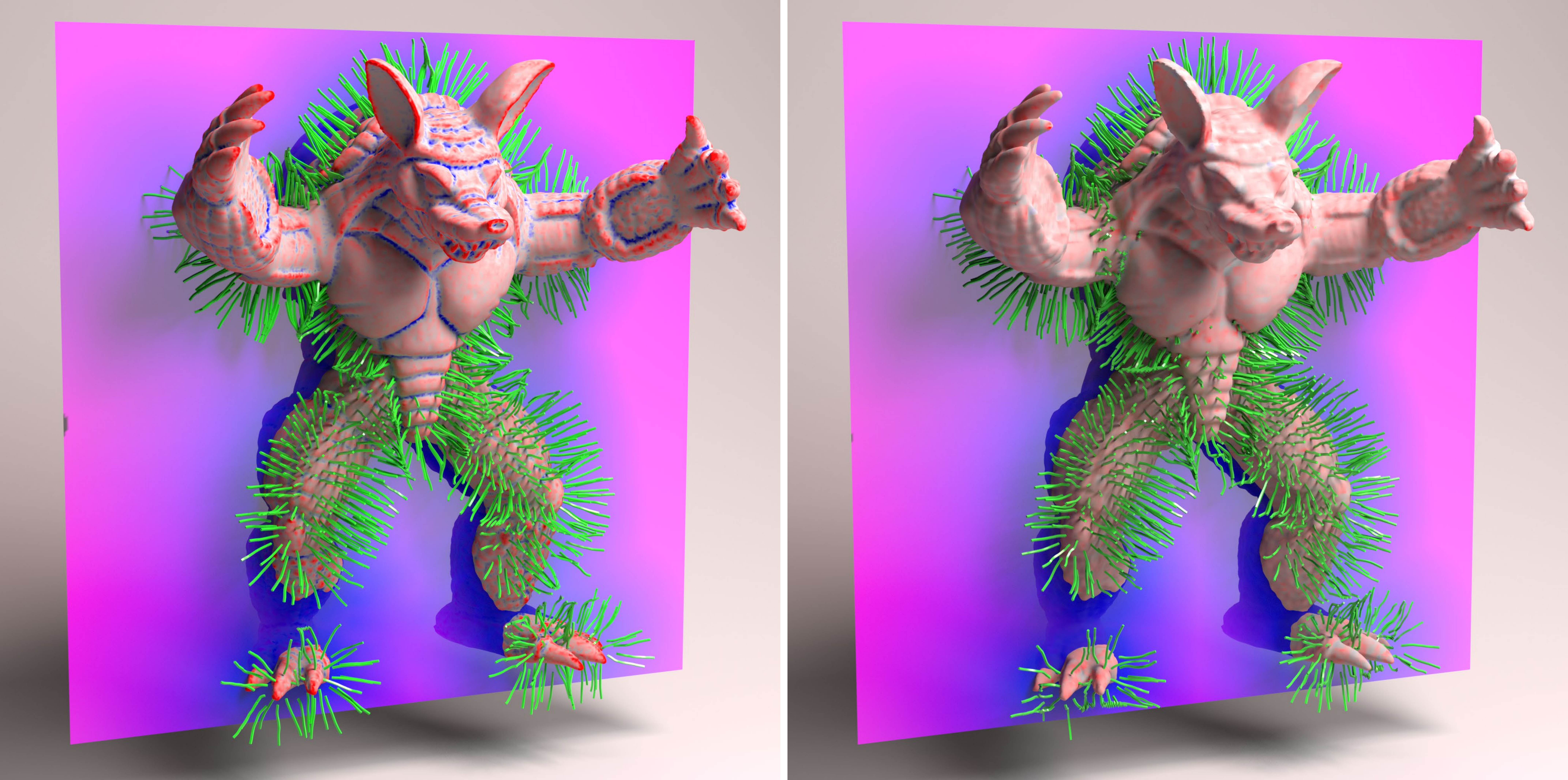}
\caption{\textbf{SDF Gradients and Curvature on Armadillo.} Our method (left) recovers smoother gradient and curvature fields than NeuralAngelo~\cite{li2023neuralangelo} (right), used here as an FD-based hash-encoding baseline under the same SDF+gradient objective. Gradients are visualized as green line segments around the surface, while curvature is shown via mesh coloring.}
\label{fig:teaser}
\end{figure}

To address this challenge of efficient differentiation, we propose \textit{Hermite-NGP}, a gradient-augmented multi-resolution hash encoding designed to support analytic derivative evaluation within a neural representation. Our key insight is inspired by the gradient-augmented field representations (e.g., gradient-augmented level set ~\cite{nave2010gradient}, affine particle-in-cell ~\cite{jiang2015affine}, and flow map methods ~\cite{zhou2024eulerian, deng2023fluid}, where a scalar field discretized on a grid is augmented with an auxiliary, collocated gradient field, enabling continuous and stable derivative reconstruction via Hermite interpolation within each grid cell (see Fig.~\ref{fig:illu1_grad}). Rather than approximating derivatives using finite-difference stencils or recovering them implicitly through automatic differentiation, these methods treat derivative information as a first-class component of the representation itself. By explicitly parameterizing and evolving the gradient field alongside the primary scalar field, gradient-augmented representations yield well-defined, easy-to-calculate differential operators throughout the domain. Such representations have been widely adopted in computational physics and later computer graphics for various simulation problems involving dynamic interfaces \cite{li2023garm, kolomenskiy2016adaptive, kolahdouz2013semi}, material transport \cite{anumolu2018gradient, lee2014narrow}, and vortical structures \cite{bockmann2014gradient, mercier2020characteristic, zhou2024eulerian}.

Hermite-NGP explicitly parameterizes both the hashed feature function and its mixed partial derivatives at each hash grid point. For each spatial resolution, we store Hermite interpolation coefficients corresponding to the function value and its partial derivatives and jointly optimize these quantities during training, resulting in a coupled representation that directly encodes local differential structure. Given these learned coefficients across hash resolutions, Hermite interpolation reconstructs a globally $C^1$-continuous spatial field whose second-order derivatives are analytic within each cell (piecewise across cell boundaries). This representation-level design eliminates truncation error from numerical differencing and avoids instability caused by automatic differentiation through discontinuous hash encodings, enabling analytic evaluation of gradients, Jacobians, Hessians, and Laplacians in a single forward pass while preserving the locality, adaptivity, and scalability of NGP-style encodings (see Figure~\ref{fig:teaser} for an example). We demonstrate the efficacy of our approach across diverse 2D and 3D PDE benchmarks by achieving relative $L^2$ errors down to $10^{-5}$, offering up to $10\times$ accuracy gains over state-of-the-art neural PDE solvers and up to $10^3\times$ improvement over baseline NGP methods, while converging within minutes with per-epoch training costs of $2$--$3.5\,\mathrm{ms}$ on a single GPU.

Our main contributions are:
\begin{itemize}
    \item \textbf{Gradient-Augmented Hash Encoding}.  
    We propose a gradient-augmented hash encoding based on Hermite interpolation that stores mixed partial derivatives at grid vertices, enabling $C^1$ Hermite interpolation with non-trivial second-order structure.

    \vspace{-6pt}

    \item \textbf{Analytic Derivative Evaluation}.  
    The proposed encoding admits fully analytic computation of first- and second-order spatial derivatives, avoiding finite-difference approximations and auto-differentiation.

    \vspace{-6pt}

    \item \textbf{Multi-resolution Coarse-to-Fine Training}.  
    We develop a coarse-to-fine training strategy that leverages the hierarchical structure of Hermite-NGP, inspired by multigrid methods for PDE optimization.

    \vspace{-6pt}

    \item \textbf{Neural PDE Solving with Complex Geometry}.
    We demonstrate Hermite-NGP across diverse 2D/3D PDEs, complex geometric domains, and intrinsic differential operators within a unified learning framework.
\end{itemize}

\vspace{-10pt}

\section{Related Work}

\paragraph{Physics-Informed Neural Networks.}
PINNs~\citep{raissi2017machine,raissi2019physics} embed PDE constraints into networks to obtain
mesh-free solutions~\citep{karniadakis2021physics} with applications in fluid mechanics~\citep{raissi2020hidden} and
libraries like DeepXDE~\citep{lu2021deepxde}. Key challenges include spectral bias~\citep{rahaman2019spectral},
gradient pathologies~\citep{wang2021understanding}, and loss imbalances~\citep{wang2022and}, addressed by adaptive
weighting~\citep{mcclenny2023self}, causal training~\citep{wang2024respecting}, curriculum
strategies~\citep{krishnapriyan2021characterizing,duancopinn}, complex geometry
handling~\citep{costabal2024delta}, and high-dimensional techniques~\citep{hu2024tackling}; recent
benchmarks~\citep{zhongkai2024pinnacle} evaluate these systematically. Alternative representations include Fourier
features~\citep{tancik2020fourier}, PIXEL~\citep{kang2023pixel}, SPINN~\citep{cho2023separable}, and
PIG~\citep{kang2025pig}. Neural operators~\citep{lu2021learning,li2021fourier} learn solution mappings but require supervision.

\vspace{-5pt}

\begin{figure*}[t]
\centering
\captionsetup{belowskip=-14pt}
\includegraphics[width=0.95\linewidth]{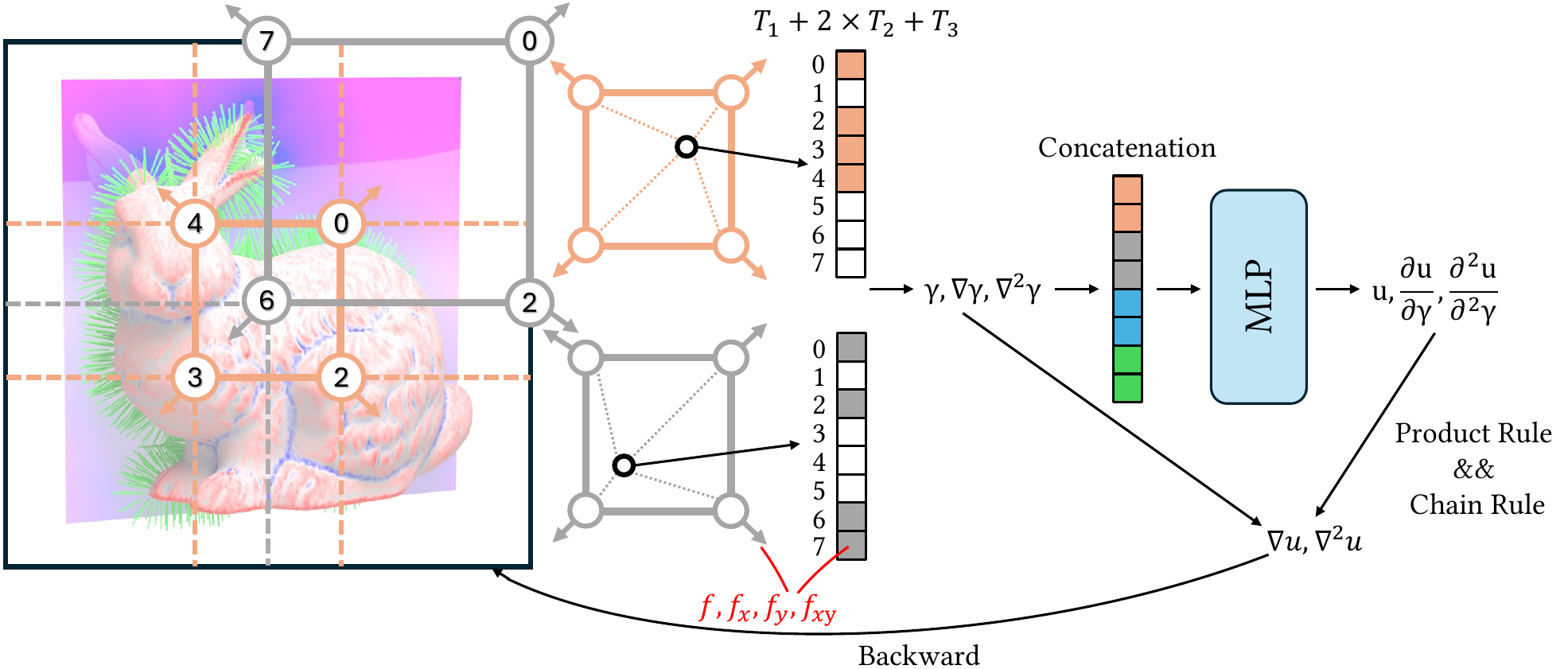}
\caption{\textbf{Hermite-NGP Workflow.} Multi-resolution grids store Hermite coefficients $(f, f_x, f_y, f_{xy})$ in separate hash tables with sizes $T_1$ (values), $T_2$ (first derivatives), and $T_3$ (mixed derivatives), producing
encoding $\gamma, \nabla\gamma, \nabla^2\gamma$. The MLP outputs $u, \partial u/\partial\gamma, \partial^2 u/\partial\gamma^2$, combined via chain rule to yield $\nabla u, \nabla^2 u$ for PDE loss.}
\label{fig:architecture}
\end{figure*}
\paragraph{Multi-Resolution Hash Encoding.}
Instant NGP~\citep{muller2022instant} introduced compact multi-resolution hash tables with $O(1)$ lookup for neural radiance fields~\citep{mildenhall2021nerf}, inspiring extensions such as Zip-NeRF~\citep{barron2023zip}, Neuralangelo~\citep{li2023neuralangelo}, NeuS2~\citep{wang2023neus2}, and Nerfstudio~\citep{tancik2023nerfstudio}. Related grid- and tensor-decomposition representations include TensoRF~\citep{chen2022tensorf}; Gaussian-based methods such as 3D Gaussian Splatting~\citep{kerbl2023gaussian} occupy a distinct but adjacent design space. Recent work also applies hash encodings to efficient PINN training~\citep{huang2024efficient, wang2024neural}.

\vspace{-5pt}

\paragraph{Gradient-Augmented Representation} 
In level set methods, gradient-augmented~\cite{nave2010gradient, bockmann2014gradient} approaches improve interface tracking accuracy by maintaining both signed distance and its gradient~\cite{jiang2015affine}. Similar ideas appear in fluid simulation, where velocity gradients enable higher-order advection
schemes. Recent neural representations have explored gradient supervision for improved surface reconstruction and normal estimation~\cite{sommer2022gradient, huang20242d}. 



\section{Background}
\label{sec:preliminaries}

\paragraph{Multi-Resolution Hash Encoding}
Multi-resolution hash encoding~\citep{muller2022instant} represents a function using $L$ resolution levels of learnable hash tables, each storing $F$-dimensional features.
At level $l \in \{0,\ldots,L-1\}$, an implicit $d$-dimensional grid has resolution
$
N_l = \lfloor N_{\min} b^l \rfloor,
$
where $b = \exp\!\left(\frac{\ln N_{\max}-\ln N_{\min}}{L-1}\right)$.

Given a query point $\mathbf{x}\in\mathbb{R}^d$, the $2^d$ neighboring grid vertices $\{g^{i,l}\}$ are identified.
Their features are retrieved via the hash function $h(g^{i,l}) = \left(\bigoplus_{j=0}^{d-1} g^{i,l}_j \pi_j\right) \bmod T^l$,
where $\bigoplus$ denotes XOR, $\{\pi_j\}$ are fixed primes, and $T^l$ is the table size.
$d$-linear interpolation over these features yields the encoding $\gamma^l(\mathbf{x})$.
Encodings from all levels are concatenated as $\gamma(\mathbf{x})\in\mathbb{R}^{L\cdot F}$ and fed to a lightweight MLP.
However, $d$-linear interpolation is only $C^0$ continuous: first-order derivatives are piecewise constant and discontinuous across cell boundaries, rendering higher-order derivatives undefined.
This precludes analytic evaluation of PDE operators involving first- or second-order derivatives (first derivatives jump across cell boundaries; second derivatives vanish within cells).

\paragraph{Physics-Informed Neural Networks}
Physics-Informed Neural Networks (PINNs)~\citep{raissi2019physics} solve PDEs by embedding physical constraints into neural network training.
Given a PDE $\mathcal{N}_{x,t}[u](x,t) = f(x,t)$ on domain $\Omega \times [0,T]$ with initial condition $u(x,0) = g(x)$ and boundary condition $\mathcal{B}_{x,t}[u] = h(x,t)$ on $\partial\Omega$, PINNs approximate $u$ using a neural network $u_\theta(x,t)$ by minimizing
\begin{equation}
\mathcal{L}(\theta)
= \lambda_{\text{res}}\mathcal{L}_{\text{res}}
+ \lambda_{\text{ic}}\mathcal{L}_{\text{ic}}
+ \lambda_{\text{bc}}\mathcal{L}_{\text{bc}}
+ \lambda_{\text{data}}\mathcal{L}_{\text{data}},
\end{equation}
where $\mathcal{L}_{\text{res}}$ enforces the PDE residual at collocation points, and the remaining terms impose initial, boundary, and data constraints. Derivative boundary conditions (e.g., Neumann) are imposed identically, via a soft loss applied to the corresponding boundary collocation set.

\section{Gradient-Augmented Representation}
\paragraph{Naming Convention.}
We denote the spatial dimension by $d$, the number of resolution levels by $L$, grid resolution at level $l$ by $N_l$, and
grid spacing by $\Delta x_l$. Hash table sizes for derivative type $i$ at level $l$ are $T^l_i$, with feature
dimension $F$ per stored value. Multi-index $\alpha \in \{0,1\}^d$ specifies partial derivative orders,
$\theta_{l,k}^{(\alpha)}$ denotes Hermite coefficients at level $l$ and hash index $k$, and $H^{(\alpha)}$ the
corresponding Hermite basis. The SIREN frequency parameter is $\omega$.

\subsection{Hermite Interpolation}

Hermite interpolation constructs smooth approximations using both function values and derivatives at grid points~\citep{hermite1878formule,birkhoff1968piecewise,ciarlet1991general}.
In 1D, given function values $f_0, f_1$ and derivatives $f'_0, f'_1$ at endpoints of interval $[0,1]$, the cubic Hermite interpolant is:
\begin{equation}
\begin{aligned}
&H(t) = f_0 h^{(0)}(t) + f_1 h^{(0)}(1-t) + f'_0 h^{(1)}(t) + f'_1 h^{(1)}(t),\\
&h^{(0)}(t) = \begin{cases}
-2t^3 + 3t^2, & 0 \le t \le 1,\\
0, & \text{otherwise,}
\end{cases}\hspace{-5mm}, \\&h^{(1)}(t) = \begin{cases}
2t^3 - 3t^2 + 1, & 0 \le t \le 1,\\
0, & \text{otherwise,}
\end{cases}    
\end{aligned}
\label{eq:hermite1d}
\end{equation}
where $h^{(i)}$, $i \in \{0,1\}$, denote the basis functions.  

For a scalar field $f(\mathbf{x})$ in $d$ dimensions defined on a grid with spacing $\Delta x$, at each grid vertex $g$ located at position $\mathbf{x}_g$, we store $2^d$ values, including the function value $f_g = f(\mathbf{x}_g)$ and the mixed partial derivatives $f_g^{(\boldsymbol{\alpha})} =\frac{\partial^{|\mathbf{\alpha}|} f}{(\partial \mathbf{x})_{\mathbf{\alpha}}}
|_{\mathbf{x}=\mathbf{x}_g}, \mathbf{\alpha} \in \{0,1\}^d$, where $\mathbf{\alpha}=(\alpha_i)_{i=0}^{d-1}$ denotes a multi-index and $|\mathbf{\alpha}| = \sum_i \alpha_i$.  For example, in three dimensions, $f_g^{(0,1,1)}=\frac{\partial^2 f}{\partial x_2 \partial x_3}|_{\mathbf{x}=\mathbf{x}_g}$.  Then the Hermite interpolant is constructed via tensor products:
\begin{equation}
H[f](\mathbf{x}) = \sum_{g \in N(\mathbf{x})} \sum_{\alpha \in \{0,1\}^d} f_g^{(\alpha)} \, H^{(\alpha)}\!\left(\frac{\mathbf{x} - \mathbf{x}_g}{\Delta x}\right) \Delta x^{|\alpha|},
\label{eq:hermite_nd}
\end{equation}
where $N(\mathbf{x})$ denotes the $2^d$ vertices of the cell containing $\mathbf{x}$, $H^{(\mathbf{\alpha})}(\mathbf{x}) = \Pi_{i=1^d}h^{(\alpha_i)}(x_i)$ are basis functions for $d$ dimensions.  This construction guarantees $C^1$ continuity with gradients exactly recovered at grid points: $\nabla H[f](\mathbf{x}_g) = (f_g^{(1,0,\ldots)}, f_g^{(0,1,\ldots)}, \ldots)$.
Unlike $d$-linear interpolation where $\nabla^2 u \equiv 0$ inside cells, Hermite interpolation provides well-defined, non-zero second derivatives throughout the domain.

\begin{figure}[t]
\centering
\captionsetup{belowskip=-16pt}
\includegraphics[width=0.95\linewidth]{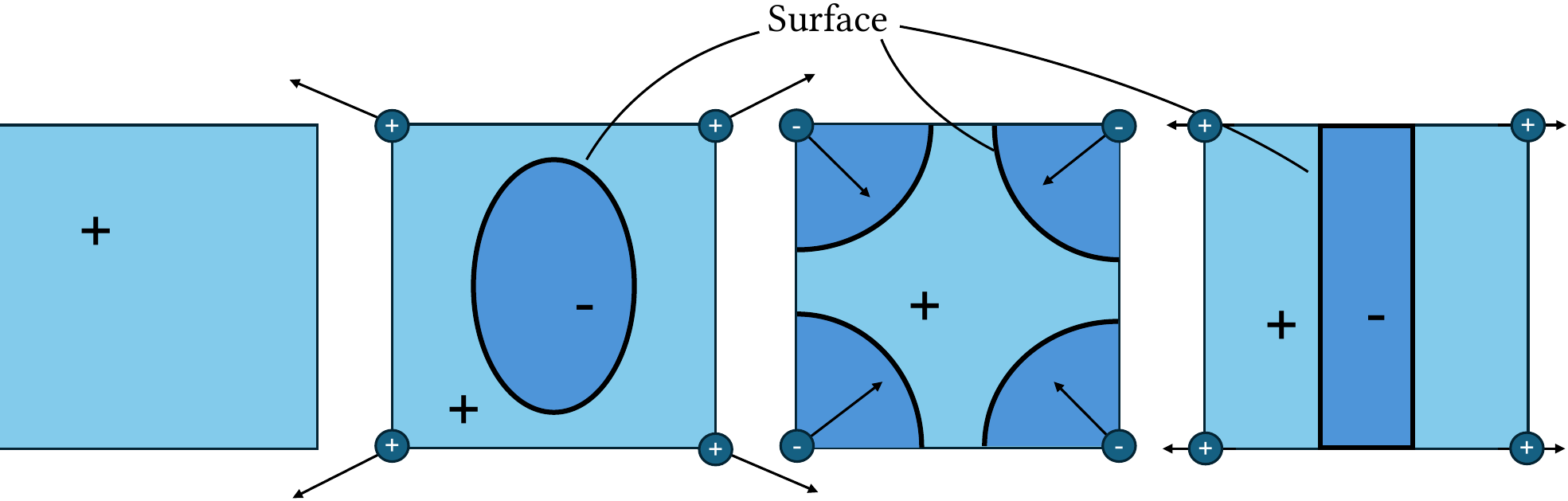}
\caption{Gradient-augmented illustration. The left cell without gradients yields constant interpolation, while the others use gradients to produce rich sub-grid features.}
\label{fig:illu1_grad}
\end{figure}
\vspace{-4pt}

\subsection{Hermite Hash Encoding}
\label{sec:hermite_encoding}

Standard hash encoding stores only function values at grid vertices and relies on $d$-linear interpolation.
While simple and efficient, this yields only $C^0$ continuity: first derivatives are piecewise constant (giving $\nabla^2 u \equiv 0$ inside cells) and discontinuous at cell boundaries.
For PINNs, this fundamentally prevents analytic computation of higher-order derivatives like the Laplacian.
Our key insight is that by storing the complete set of mixed partial derivatives $\partial^\alpha f$ for $\alpha \in \{0,1\}^d$ at each vertex, we have exactly the degrees of freedom needed for $C^1$ Hermite interpolation \citep{ciarlet1991general} with well-defined second derivatives.

For each hash entry, we store the complete set of partial derivatives $\{f^{(\alpha)}\}_{\alpha \in \{0,1\}^d}$---e.g., $(f, f_x, f_y, f_{xy})$ in 2D---requiring $2^d$ coefficients per vertex rather than one.
To manage this increased storage efficiently, we maintain \emph{separate hash tables} grouped by derivative type.
In 2D, this yields three tables: (1) $T_1 \times F$ for function values $f$; (2) $T_2 \times 2F$ for first derivatives $(f_x, f_y)$; and (3) $T_3 \times F$ for the mixed derivative $f_{xy}$.  This separation allows different tables to use different sizes based on their representational requirements.


At resolution level $l$ with grid spacing $\Delta x_l = 1/N_l$, the Hermite hash encoding is:
\begin{equation}
\gamma^l(\mathbf{x}) = \sum_{g \in N_l(\mathbf{x})} \sum_{\alpha \in \{0,1\}^d} \theta_{l,h(g)}^{(\alpha)} H^{(\alpha)}\!\left(\frac{\mathbf{x} - \mathbf{x}_g}{\Delta x_l}\right) \Delta x_l^{|\alpha|},
\label{eq:hermite_encoding}
\end{equation}
where $N_l(\mathbf{x})$ denotes the $2^d$ vertices of the cell containing $\mathbf{x}$ at level $l$, $\theta_{l,h(g)}^{(\alpha)}$ are the learnable Hermite coefficients retrieved from the hash table corresponding to derivative order $|\alpha|$, and $H^{(\alpha)}$ are the basis functions defined in Section~\ref{sec:preliminaries}.
Like standard NGP, hash collisions within each table are resolved implicitly through gradient-based optimization.

Features from all $L$ resolution levels are concatenated to form the final encoding $\gamma(\mathbf{x}) = (\gamma^0(\mathbf{x}), ... ,\gamma^{L-1}(\mathbf{x})) \in \mathbb{R}^{L \cdot F}$, where $F$ is the feature dimension per stored value.  In 2D, the total parameter count across all tables is $L \times (T_1 + 2T_2 + T_3) \times F$.
When all tables use the same size $T$, this simplifies to $L \times 4T \times F$, matching the $2^d$ factor from the 4 Hermite coefficients per vertex.

A key advantage of storing derivative information explicitly is that the encoding captures richer local spatial structure than function values alone.
This enables the network to represent sharp gradients and rapid spatial variations directly in the encoding, which is particularly beneficial for PDEs with complex boundary conditions---the derivative coefficients can adapt locally to enforce boundary constraints while the function values capture the global solution structure.
This spatial adaptivity, inherited from hash encoding's irregular grid structure, allows Hermite-NGP to handle complex geometric boundaries naturally without requiring boundary-conforming meshes. Note that the hash function $h(\cdot)$ is a discrete index lookup mapping integer grid coordinates to table entries: it is not part of the continuous computation graph, and all spatial derivatives are computed through the smooth Hermite basis applied to the retrieved coefficients.

\section{Analytic Differentiation}
\subsection{Analytic Derivative Calculation}
\label{sec:analytic_deriv}

PDE residuals like the Laplacian $\nabla^2 u$ require second-order spatial derivatives.
Standard autodifferentiation cannot provide meaningful second derivatives for hash encoding: the $d$-linear interpolation yields piecewise constant first derivatives, making second derivatives zero almost everywhere.
Standard NGP therefore resorts to finite differences (FD) like INGP-FD~\cite{huang2024efficient, li2023neuralangelo, wang2024neural}, requiring $2d+1$ forward passes (5 in 2D, 7 in 3D) and introducing $O(\epsilon^2)$ truncation error in the estimated second derivatives of the model output.  This truncation error creates a fundamental accuracy ceiling around $10^{-5}$, regardless of how long training continues. For applications requiring higher precision, analytic derivatives are therefore essential rather than merely preferable.
Hermite encoding enables analytic computation in a single forward pass by differentiating the basis functions directly.

Derivatives of the Hermite encoding follow by differentiating the basis functions. Defining $\mathbf{t} = (\mathbf{x}
  - \mathbf{x}_g)/\Delta x_l$, we have:
  \newcommand{\partialk}{\partial^k}

\begin{equation}
\begin{aligned}
\partialk_{i_1\cdots i_k} \gamma_l
&=
\sum_{g \in N_l(\mathbf{x})}
\sum_{\alpha \in \{0,1\}^d}
\theta_{l,h(g)}^{(\alpha)}\,
\Delta x_l^{|\alpha|-k}\,
\partialk_{i_1\cdots i_k} H^{(\alpha)}(\mathbf{t}) .
\end{aligned}
\label{eq:encoding_deriv}
\end{equation}

  For first derivatives ($k=1$) and second derivatives ($k=2$, enabling analytic Laplacian), the scaling $\Delta
  x_l^{|\alpha|-k}$ ensures correct dimensional behavior across resolution levels.

These formulas provide analytic derivatives of the Hermite interpolant within each cell, free of truncation error.

For implementation, the derivatives of the 1D Hermite basis functions are:
\begin{equation}\label{eq:basis_deriv}
\resizebox{\linewidth}{!}{$
\begin{aligned}
\frac{\partial h^{(0)}}{\partial t}(t) &=
\begin{cases}
-6t^2+6t, & 0\le t<1,\\
6t^2+6t,  & -1<t<0,\\
0,        & |t|>1,
\end{cases} \frac{\partial^2 h^{(0)}}{\partial t^2}(t) =
\begin{cases}
-12t+6, & 0\le t<1,\\
12t+6,  & -1<t<0,\\
0,      & |t|\ge1,
\end{cases}
\\
\frac{\partial h^{(1)}}{\partial t}(t) &=
\begin{cases}
3t^2-4t+1, & |t|<1,\\
0,      & |t|\ge1,
\end{cases} \frac{\partial^2 h^{(1)}}{\partial t^2}(t) =
\begin{cases}
6t-4, & |t|<1,\\
0,      & |t|\ge1,
\end{cases}
\end{aligned}
$}
\end{equation}

For the $d$-dimensional Hermite basis, derivatives can be factorized as $\frac{\partial H^{(\alpha)}(\mathbf{y})}{\partial x_i} = \frac{\partial h^{(i)}(y_i)}{\partial x_i}\Pi_{k\neq i} h^{(k)}(y_k)$, $\frac{\partial^2 H^{(\alpha)}(\mathbf{y})}{\partial x_i^2} = \frac{\partial^2 h^{(i)}(y_i)}{\partial x_i^2}\Pi_{k\neq i} h^{(k)}(y_k)$ and $\frac{\partial^2 H^{(\alpha)}(\mathbf{y})}{\partial x_i\partial x_j} = \frac{\partial h^{(i)}(y_i)}{\partial x_i}\frac{\partial h^{(j)}(y_j)}{\partial x_j}\Pi_{k\neq i,j} h^{(k)}(y_k)$ for $i \neq j$, where $\mathbf{y} = (\frac{\mathbf{x}-\mathbf{x}_g}{\Delta x_l})$ (see Appendix~\ref{app:encoding_derivation} for the complete 2D worked example).  This factorization enables efficient vectorized computation: we compute 1D basis values and derivatives once per dimension, then combine them via outer products.
The Laplacian is computed analytically by summing diagonal Hessian entries: $\nabla^2 \gamma = \sum_{i=1}^d \frac{\partial^2 \gamma}{\partial x_i^2}$.


Furthermore, our grouped table structure (Section~\ref{sec:hermite_encoding}) allows allocating smaller tables to higher-order derivatives, providing fine-grained control over the memory-accuracy tradeoff(see Section ~\ref{subsec:hash-ablation} for ablation study).

\vspace{-10pt}
\begin{figure*}[!htb]
    \centering
\captionsetup{belowskip=2pt}

    \includegraphics[width=1.0\linewidth]{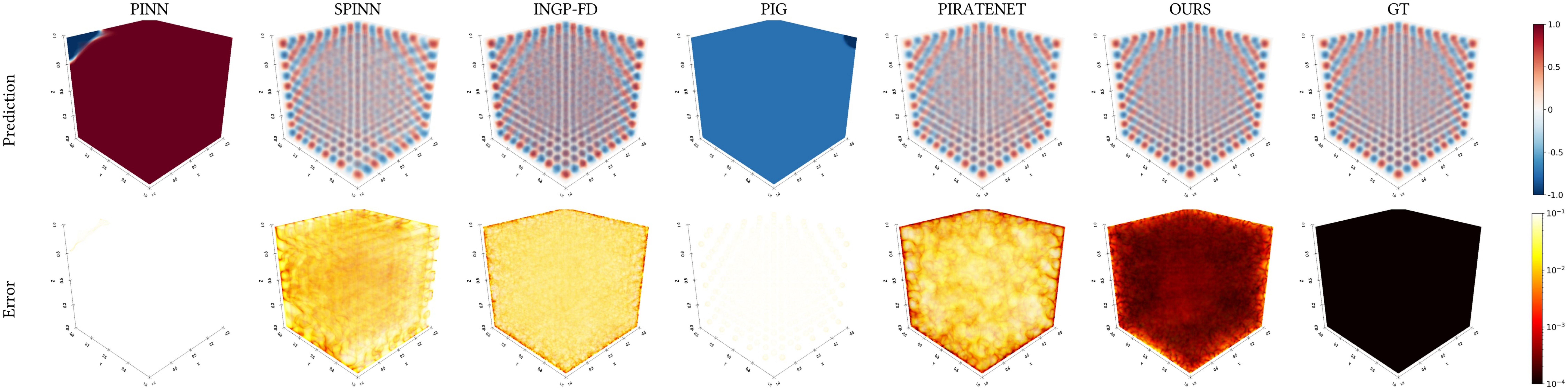}
    \caption{\textbf{Helmholtz 3D ($a{=}10$).} Cross-sectional slices of the 3D Helmholtz solution. Hermite-NGP accurately captures the oscillatory wave patterns, achieving an $L^2$ error of $6\times10^{-3}$, substantially better than the closest baseline, I-NGP-FD ($7.21\times10^{-2}$).}
    \label{fig:helm_3d_a10}
\end{figure*}

\begin{algorithm}
\caption{Hermite-NGP Training}
\label{alg:hermite_ngp}
\begin{algorithmic}[1]
\Require Points $\mathbf{x} \in \mathbb{R}^{N \times d}$, hash tables
$\{\theta_l^{(\alpha)}\}$, MLP $\phi$, active levels $L_{\text{active}}$
\Ensure PDE solution $u(\mathbf{x})$, derivatives $\nabla u$, $\nabla^2 u$
\State \textbf{// Hermite Hash Encoding (Sec.~\ref{sec:hermite_encoding})}
\For{$l = 1, \ldots, L_{\text{active}}$} \Comment{Coarse-to-fine:
Sec.~\ref{sec:Coarse-to-fine}}
  \State $g \gets N_l(\mathbf{x})$ \Comment{$2^d$ grid vertices}
  \State $k \gets h(g) \mod T_l$ \Comment{Hash function}
  \State $\theta_{l,k}^{(\alpha)} \gets \text{Lookup}(\theta_l, k)$ for
  $\alpha \in \{0,1\}^d$
  \State $\gamma_l, \nabla\gamma_l, \nabla^2\gamma_l \gets \sum_g
  \sum_\alpha \theta_{l,k}^{(\alpha)} H^{(\alpha)}(\mathbf{x})$
  \Comment{Eq.~\ref{eq:hermite_encoding}}
\EndFor
\State $\gamma \gets [\gamma_1; \ldots; \gamma_{L_{\text{active}}}]$
\Comment{Concat multi-resolution}
\State \textbf{// MLP with Analytic Derivatives (Sec.~\ref{sec:mlp_deriv})}
\State $u, \frac{\partial u}{\partial \gamma}, \frac{\partial^2
u}{\partial \gamma^2} \gets \text{MLP}_\phi(\gamma)$ \Comment{SIREN}
\State $\nabla u \gets \frac{\partial u}{\partial \gamma} \nabla\gamma$; \quad
$\nabla^2 u \gets \frac{\partial^2 u}{\partial \gamma^2} (\nabla\gamma)^2
+ \frac{\partial u}{\partial \gamma} \nabla^2\gamma$
\State \textbf{// PDE Loss and Backward}
\State $\mathcal{L} \gets \mathcal{L}_{\text{pde}}(u, \nabla u, \nabla^2 u)
+ \lambda_{\text{bc}}\mathcal{L}_{\text{bc}}$
\State $\text{Backward}(\mathcal{L}) \to$ update $\{\theta_l^{(\alpha)}\}, \phi$
\State \Return $u, \nabla u, \nabla^2 u$
\end{algorithmic}
\end{algorithm}

\subsection{End-to-End Differentiation}\label{sec:mlp_deriv}
The complete Hermite-NGP model computes $u_\theta(\mathbf{x}) = \text{MLP}_\phi(\gamma(\mathbf{x}))$, where
  $\gamma(\mathbf{x})$ is the Hermite hash encoding.
  A key advantage is that analytic derivatives from the encoding propagate through the MLP to obtain differential
  operators without finite differences or autograd overhead.

  We use SIREN activations~\citep{sitzmann2020implicit} with $\sigma(x) = \sin(\omega x)$, satisfying $\sigma'' =
  -\omega^2 \sigma$.
  For a single layer $u = W_2 \sin(\omega(W_1 \gamma + b_1)) + b_2$, denoting $z = W_1 \gamma + b_1$ and $a =
  \sin(\omega z)$, the Laplacian is:
  \begin{equation}
  \nabla^2 u = W_2 \Big[ -\omega^2 a \odot \textstyle\sum_{i=1}^d (W_1 \gamma_{x_i})^{2} + \omega \cos(\omega z) \odot
  W_1 \nabla^2 \gamma \Big],
  \label{eq:full_laplacian}
  \end{equation}
  where $\gamma_{x_i} = \frac{\partial \gamma}{\partial x_i}$ and $\nabla^2 \gamma$ are the analytic encoding
  derivatives from Section~\ref{sec:analytic_deriv}.
  All terms except $\gamma_{x_i}$ and $\nabla^2 \gamma$ reuse quantities from the forward pass.

For $K$-layer networks, derivatives propagate recursively. Let $a^{(0)} = \gamma$ and $a^{(k)} = \sin(\omega z^{(k)})$
where $z^{(k)} = W_k a^{(k-1)} + b_k$:
\begin{align}
\frac{\partial a^{(k)}}{\partial x_i} &= \omega \cos(\omega z^{(k)}) \odot W_k \frac{\partial a^{(k-1)}}{\partial
x_i}, \label{eq:recursive_first} \\
\frac{\partial^2 a^{(k)}}{\partial x_i^2} &= -\omega^2 a^{(k)} \odot \left(W_k \tfrac{\partial a^{(k-1)}}{\partial
x_i}\right)^{\!2} \\&+ \omega \cos(\omega z^{(k)}) \odot W_k \frac{\partial^2 a^{(k-1)}}{\partial x_i^2}.
\label{eq:recursive_second}
\end{align}
The base case uses Hermite encoding derivatives; the full Laplacian $\nabla^2 u = \sum_{i=1}^d \frac{\partial^2
u}{\partial x_i^2}$ is computed in a single forward pass (see Appendix~\ref{app:siren_laplacian}).
We initialize MLP weights following~\citet{sitzmann2020implicit} with $\omega_0 = 30$; hash table coefficients start
near zero. Algorithm~\ref{alg:hermite_ngp} summarizes the complete Hermite-NGP training pipeline, from hash encoding through analytic derivatives to PDE loss computation.

The recursion in~\eqref{eq:recursive_first}--\eqref{eq:recursive_second} works with any twice-differentiable activation (Swish, Softplus, GELU, etc.); even with an autograd MLP, the Hermite encoding's analytic derivatives are preserved.

\vspace{-10pt}
\section{Multi-Resolution Coarse-to-Fine Training}\label{sec:Coarse-to-fine}

Inspired by multigrid methods~\citep{briggs2000multigrid} and coarse-to-fine strategies in neural surface
reconstruction~\citep{li2023neuralangelo}, we leverage the hierarchical structure of multi-resolution hash encoding
for curriculum training.
We employ a three-phase strategy:
(1) train only coarse levels ($l = 0, \ldots, L_0$) to capture global structure;
(2) progressively activate finer levels;
(3) fine-tune all levels jointly.
The number of active levels follows $L_{\text{active}}(t) = \min(L, L_0 + \lfloor t / \tau \rfloor)$, where $\tau$ is
the activation interval.
Ablation studies (Section~\ref{sec:ablation}) show coarse-to-fine achieves 79\% error reduction over training all
levels simultaneously. We summarize the pipeline in Algorithm ~\ref{alg:hermite_ngp}.

\vspace{-10pt}
  
  \begin{figure}
    \centering
\captionsetup{belowskip=-12pt}
    \includegraphics[width=1.0\linewidth]{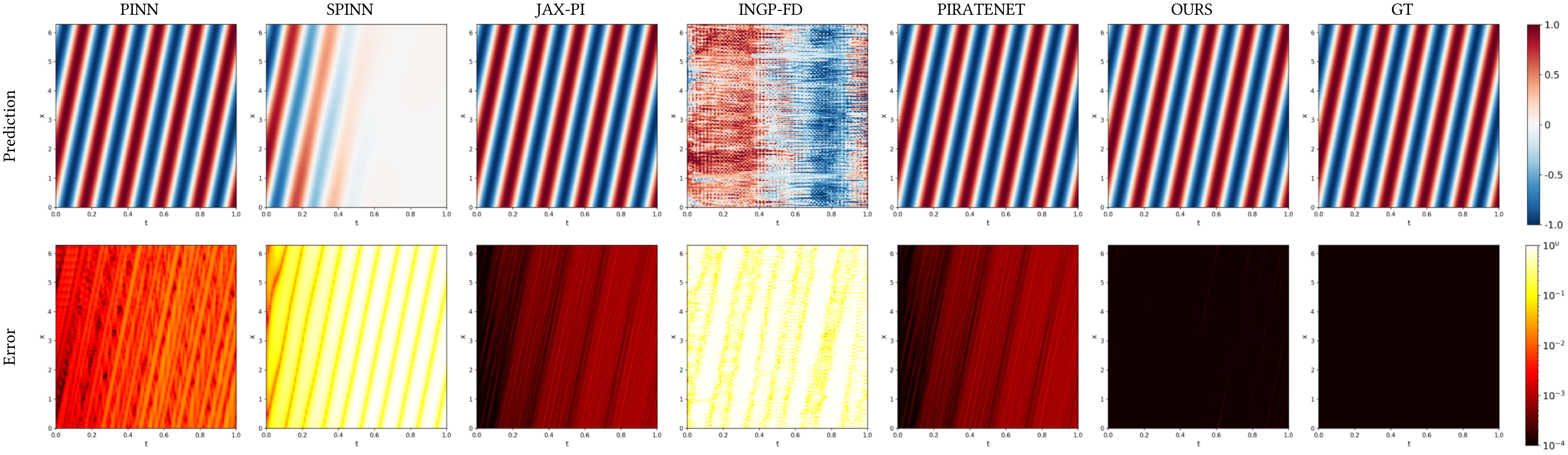}
     \caption{Convection 1+1D ($c=30$): Solution field at final time. Hermite-NGP preserves the sharp traveling wave with
  $L^2 = 8.49{\times}10^{-5}$, a $10\times$ improvement over PirateNet ($8.54\times10^{-4}$), while SPINN and INGP-FD fail to converge.}
    \label{fig:convection}
\end{figure}
\section{Experiments}
\label{sec:experiments}
\begin{figure}
    \centering
\captionsetup{belowskip=-22pt}

   \includegraphics[width=1.0\linewidth]{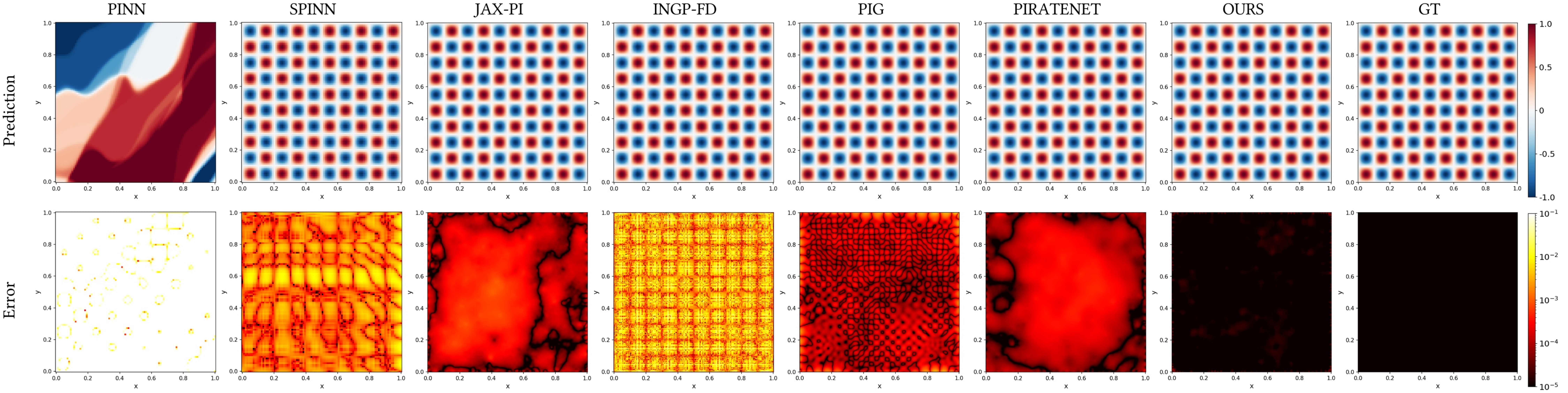}
    \caption{Helmholtz 2D ($a=10$): Ground truth (left), Hermite-NGP prediction (middle), and pointwise error (right). Our
   method achieves relative $L^2$ error of $1.81{\times}10^{-5}$, compared to $3.57{\times}10^{-4}$ for the closest baseline.}
    \label{fig:helm_2d_10}
\end{figure}
We evaluate Hermite-NGP on a broad suite of 2D/3D PDE benchmarks, covering elliptic, hyperbolic, and parabolic equations, as well as geometric applications. These benchmarks probe complementary aspects of neural PDE solvers, including high-frequency oscillations, sharp transport, coupled dynamics, complex geometries, and geometric constraints. For problems without analytic solutions, we obtain ground truth from high-resolution conventional PDE solvers. All metrics are reported as relative $L_2$ errors. Full PDE formulations and training details are provided in App.~\ref{app:exp_details}.

\vspace{-10pt}


\begin{table*}[t]
\centering
\caption{Relative $L^2$ error comparison. Best results in \textbf{bold}. ``fail'' denotes failure ($L^2 \geq 1.0$ or divergence). NA indicates not evaluated. $^\dagger$Reported in the original paper. $^\ddagger$Compact format (larger model). For Helmholtz 2D, our results report small-model performance, with large-model results in parentheses.}

\label{tab:main_results}
\resizebox{\textwidth}{!}{%
\begin{tabular}{l|ccc|cc|ccc}
\toprule
& \multicolumn{3}{c|}{Helmholtz 2D} & \multicolumn{2}{c|}{Helmholtz 3D} & Conv. & T-G & Flow \\
Method & $a{=}10$ & $a{=}20$ & $a{=}100$ & $a{=}3$ & $a{=}10$ & $c{=}30$ & $\nu{=}.01$ & Mix. \\
\midrule
Ours$^\ddagger$ & 5.29e-05 (\textbf{1.81e-05}) & 9.87e-05 (\textbf{7.93e-05}) & \textbf{4.59e-02} & \textbf{6.09e-05} & \textbf{6.01e-03} & \textbf{8.49e-05} & \textbf{7.71e-05} & \textbf{2.35e-04} \\
PirateNet & 3.57e-04 & 1.36e-03 & fail & 8.40e-04 & 1.55e-01 & 8.54e-04 & fail & NA \\
JAX-PI & 5.76e-04 & 1.20e-03 & fail & NA & NA & 8.54e-04 & NA & NA \\
INGP-FD & 1.67e-03 & 2.77e-03 & 4.98e-01 & 4.04e-03 & 7.21e-02 & 7.02e-01 & 6.80e-01 & fail \\
SPINN & 5.46e-03 & 3.30e-02 & 7.08e-01 & 2.28e-02 & 9.61e-02 & 7.92e-01 & 3.98e-01 & 2.90e-03$^\dagger$ \\
PIXEL & 3.47e-02 & 1.34e-01 & fail & NA & NA & 1.84e-03$^\dagger$ & NA & NA \\
PINN & fail & fail & fail & 2.58e-01 & fail & 1.11e-02 & 4.57e-02 & NA \\
PIG & 7.04e-04 & fail & fail & 2.57e-01 & fail & NA & 7.27e-04 & 2.67e-04$^\dagger$ \\
\bottomrule
\end{tabular}%
}
\end{table*}

\begin{figure}
    \centering
\captionsetup{belowskip=-20pt}

    \includegraphics[width=1.0\linewidth]{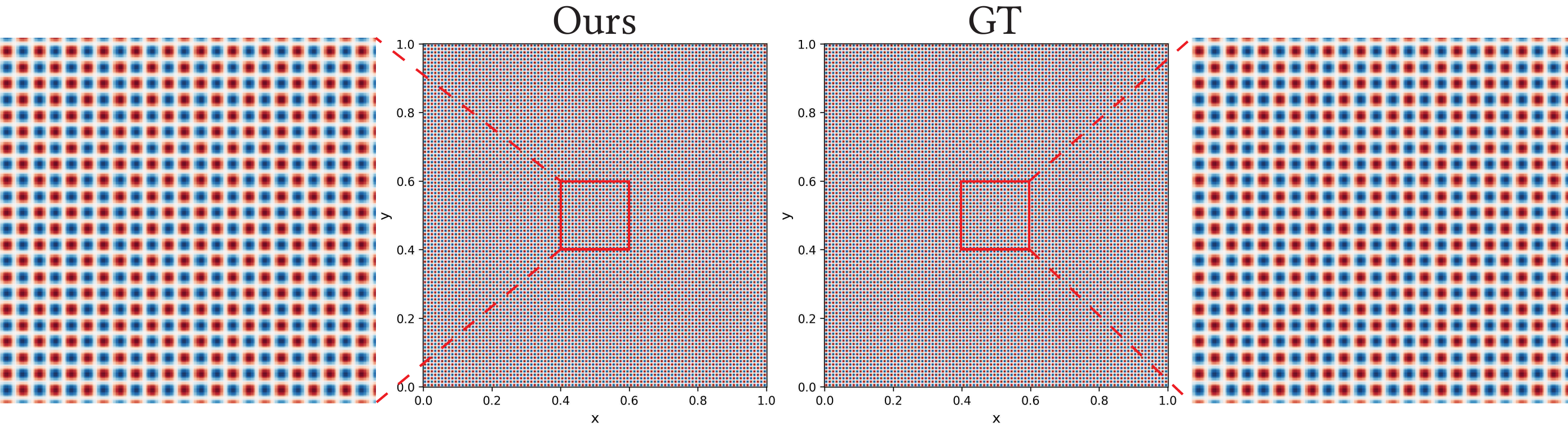}
    \caption{Helmholtz 2D ($a=100$) enlarged: Hermite-NGP is the only method that converges on this challenging
  high-frequency setting, achieving $L^2 = 4.59{\times}10^{-2}$. All baseline methods fail to capture the rapid
  oscillations.}
    \label{fig:helm_2d_single_ours_large}
\end{figure}

\subsection{Experimental Setup}
\label{sec:exp_setup}

\paragraph{Baselines.}
We compare Hermite-NGP against recent PINN-based methods: PirateNet~\citep{wang2024piratenets}, JAX-PI~\citep{wang2023expert}, PIG~\citep{kang2025pig}, INGP-FD~\citep{huang2024efficient} (hash encoding with finite differences), SPINN~\citep{cho2023separable}, PIXEL~\citep{kang2023pixel}, and a vanilla PINN baseline~\citep{raissi2019physics}. For all baselines, we use the official implementations with recommended hyperparameters and, unless otherwise noted, a shared training protocol (Adam~\cite{kingma2014adam} optimizer, GradNorm loss balancing), see details in Appendix ~\ref{app:training_config}.

Table~\ref{tab:main_results} summarizes the relative $L_2$ errors across all benchmarks, and full PDE formulations are provided in App.~\ref{app:pde_formulations}.

\vspace{-10pt}

\begin{figure}[t]
    \centering
\captionsetup{skip=3pt, belowskip=-24pt}
    \includegraphics[width=0.5\textwidth]{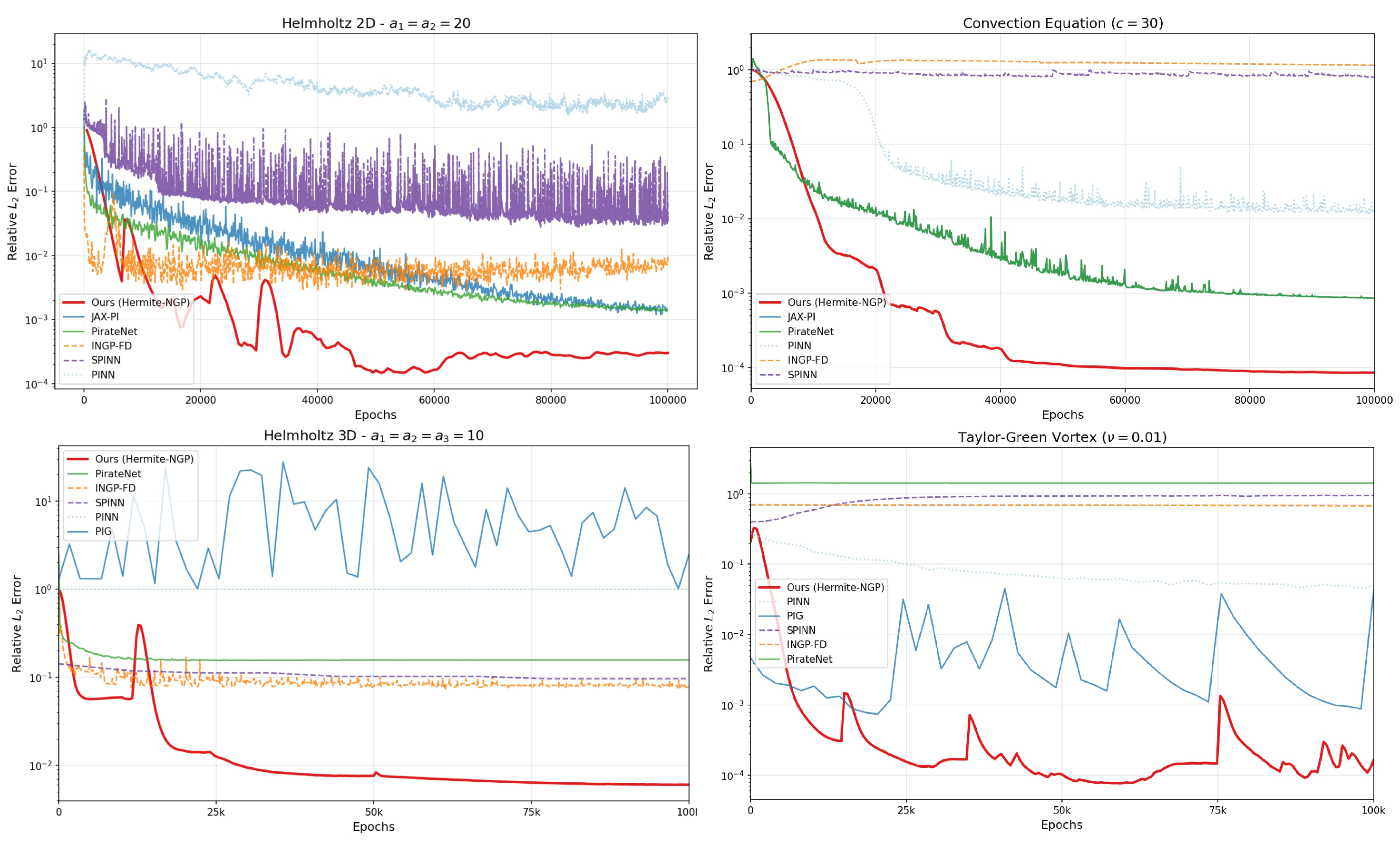}
    \caption{Training loss curves across four different benchmarks. Our method’s loss curve decreases smoothly, in contrast to the highly oscillatory behavior of the baselines, and converges to errors up to \textbf{four orders of magnitude} lower across different benchmarks; see Table~\ref{tab:main_results} for details.}
    \label{fig:loss_curves}
\end{figure}

\subsection{2D PDE Experiments}
\label{sec:2d_exp}

We evaluate two 2D benchmarks that stress high-frequency oscillations and sharp transport.
For \textbf{Helmholtz 2D} (Figs.~\ref{fig:helm_2d_10}, \ref{fig:helm_2d_20}, and \ref{fig:helm_2d_100}),
Hermite-NGP achieves relative $L^2$ errors of $1.81{\times}10^{-5}$, $7.93{\times}10^{-5}$, and
$4.59{\times}10^{-2}$ for $a{=}10,20,100$, corresponding to $20{\times}$, $17{\times}$, and $11{\times}$
improvements over the strongest baselines, respectively. Hermite-NGP is also the only method that converges
at the challenging $a{=}100$ setting. For \textbf{time-dependent Convection ($1{+}1$D)} with $c{=}30$
(Fig.~\ref{fig:convection}), Hermite-NGP attains a relative error of $8.49{\times}10^{-5}$, a $10{\times}$
reduction compared to PirateNet ($8.54{\times}10^{-4}$). Grid-based methods (INGP-FD, SPINN) fail to resolve
the sharp front due to severe numerical diffusion from their finite-difference discretization.

\paragraph{Additional Baselines (Helmholtz 2D, $a{=}10$).}
Under a unified protocol, $\partial^\infty$-Grid~\citep{kairanda2026dinf} reaches $6.07{\times}10^{-3}$ and SIREN~\citep{sitzmann2020implicit} $6.67{\times}10^{-2}$ -- both two to three orders of magnitude behind our result in Table~\ref{tab:main_results} (Fig.~\ref{fig:helm_a10_compare}; Appendix Table~\ref{tab:helm_extra_baselines}).

\begin{figure}[h]
\centering
\includegraphics[width=1.0\linewidth]{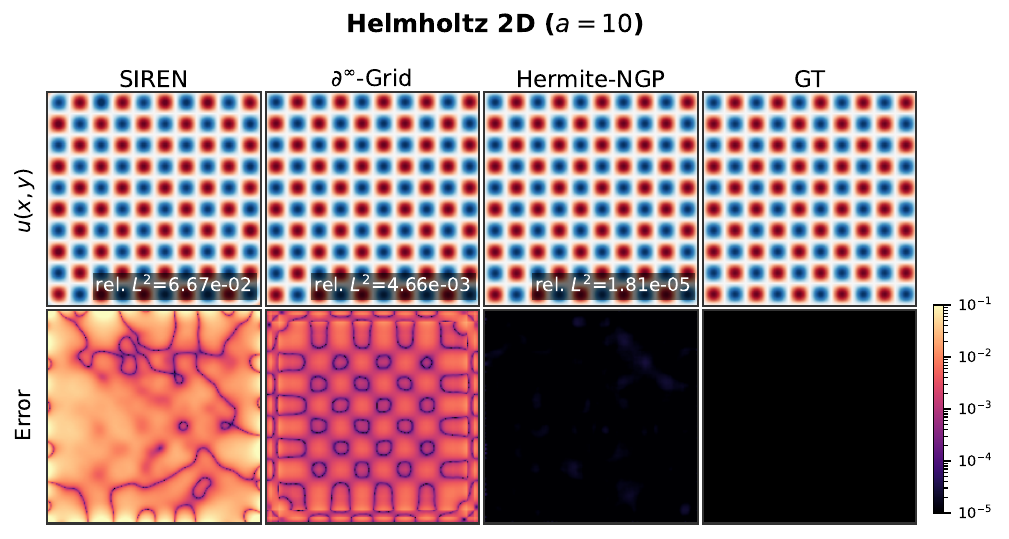}
\caption{Helmholtz 2D ($a{=}10$) qualitative comparison across Hermite-NGP, $\partial^\infty$-Grid~\citep{kairanda2026dinf}, SIREN. Only Hermite-NGP recovers the high-frequency structure cleanly.}
\label{fig:helm_a10_compare}
\end{figure}

\vspace{-10pt}

\subsection{3D PDE Experiments}
\label{sec:3d_exp}

\begin{figure}[t]
    \centering
    \includegraphics[width=1.0\linewidth]{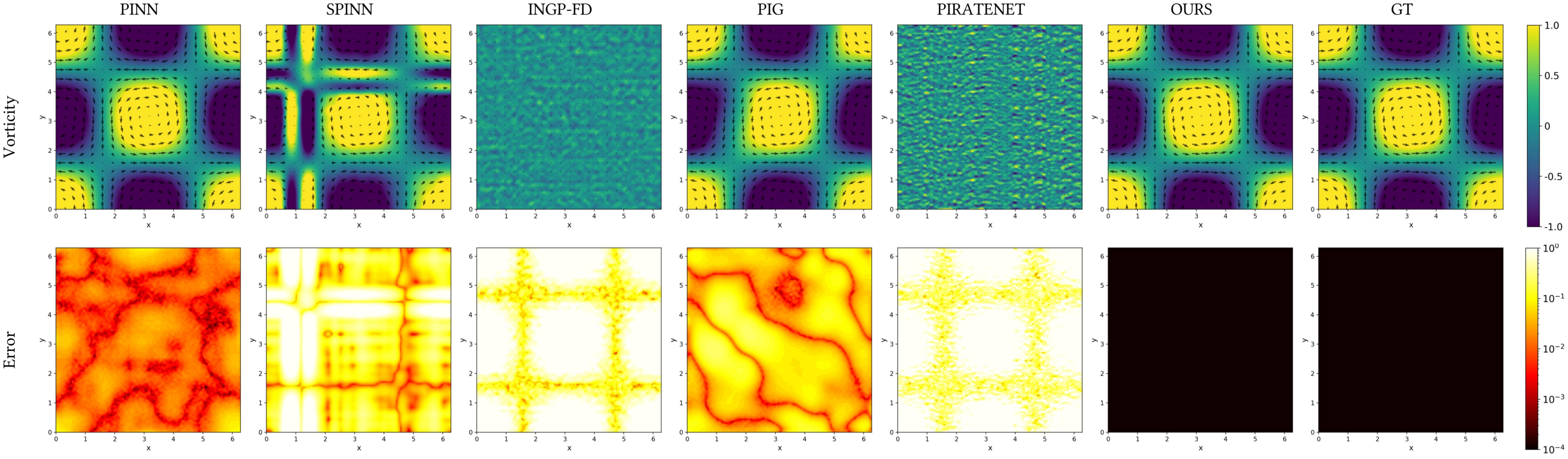}
     \caption{Taylor-Green vortex ($\nu=0.01$): Velocity magnitude at $t=1$. Hermite-NGP captures the decaying vortex
  structure with $L^2 = 7.71{\times}10^{-5}$, outperforming PIG by $9{\times}$.}
    \label{fig:taylor_green}
\end{figure}

\begin{figure}
    \centering
\captionsetup{belowskip=-16pt}
    \includegraphics[width=1.0\linewidth]{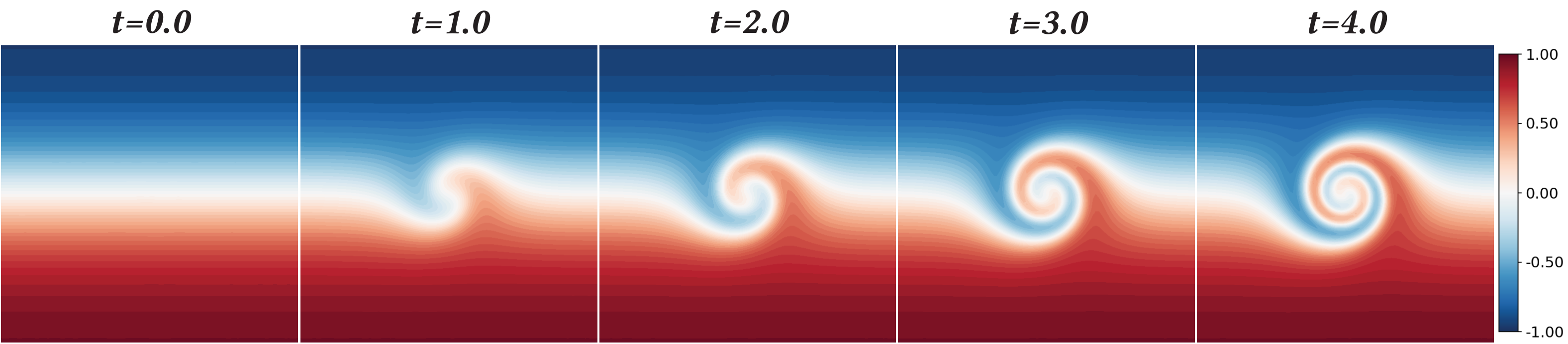}
    \caption{Flow Mixing: Solution field showing rotational transport. Hermite-NGP achieves $L^2 = 2.35{\times}10^{-4}$,
  resolving sharp gradients from fluid stretching.}
    \label{fig:flow_mixing}
\end{figure}

We evaluate three 3D benchmarks, including a stationary Helmholtz problem and two time-dependent $2{+}1$D
flows. For \textbf{Helmholtz 3D} (Figs.~\ref{fig:helm_3d_a10}, \ref{fig:helm_3d_a3}), Hermite-NGP achieves
relative $L^2$ errors of $6.09{\times}10^{-5}$ at $a{=}3$ (14$\times$ lower than PirateNet) and
$6.01{\times}10^{-3}$ at $a{=}10$ (12$\times$ lower than INGP-FD), demonstrating strong scalability to 3D.
For \textbf{time-dependent Taylor–Green vortex ($2{+}1$D)} (Fig.~\ref{fig:taylor_green}), Hermite-NGP attains
a relative $L^2$ error of $7.71{\times}10^{-5}$, which is 9$\times$ lower than PIG and over 600$\times$ lower
than the vanilla PINN baseline, while PirateNet fails to converge on this coupled system. For \textbf{time-dependent flow mixing ($2{+}1$D)} (Figs.~\ref{fig:flow_mixing} and \ref{fig:flow_compare}),
Hermite-NGP reaches a relative $L^2$ error of $2.35{\times}10^{-4}$, slightly outperforming PIG
($2.67{\times}10^{-4}$) and achieving a 12$\times$ improvement over SPINN; the PIG and SPINN numbers are taken from their original publications. For the first two  benchmarks, the second-row error visualizations show that Hermite-NGP consistently has the smallest discrepancy from the ground truth
compared to all baselines. Quantitative results are summarized in Table~\ref{tab:main_results}.

\vspace{-6pt}




\subsection{3D PDEs on Complex Geometries}
\label{sec:complex_geometry}

We evaluate Hermite-NGP on 3D Stanford meshes with varying geometric complexity, assessing the ability of neural PDE solvers to handle intricate shapes, complex boundaries, and higher-order geometric operators such as curvature.

\vspace{-10pt}

\paragraph{3D Poisson with Mesh Boundary Conditions.}
We solve a homogeneous Poisson problem (Laplace equation) $\Delta u = 0$ on $[0,1]^3$, with the mesh surface as an inner Dirichlet boundary ($u=1$) and the domain boundary as an outer Dirichlet boundary ($u=0$). Ground truth is computed using a high-resolution conjugate-gradient solver in SciPy~\citep{virtanen2020scipy}.
Table~\ref{tab:complex3d} reports relative $L^2$ errors, where ours achieves, on average, a
$3{\times}$ lower error than PIG.

\vspace{-8pt}
\begin{figure}
    \centering
\captionsetup{belowskip=-6pt}
    \includegraphics[width=1.0\linewidth]{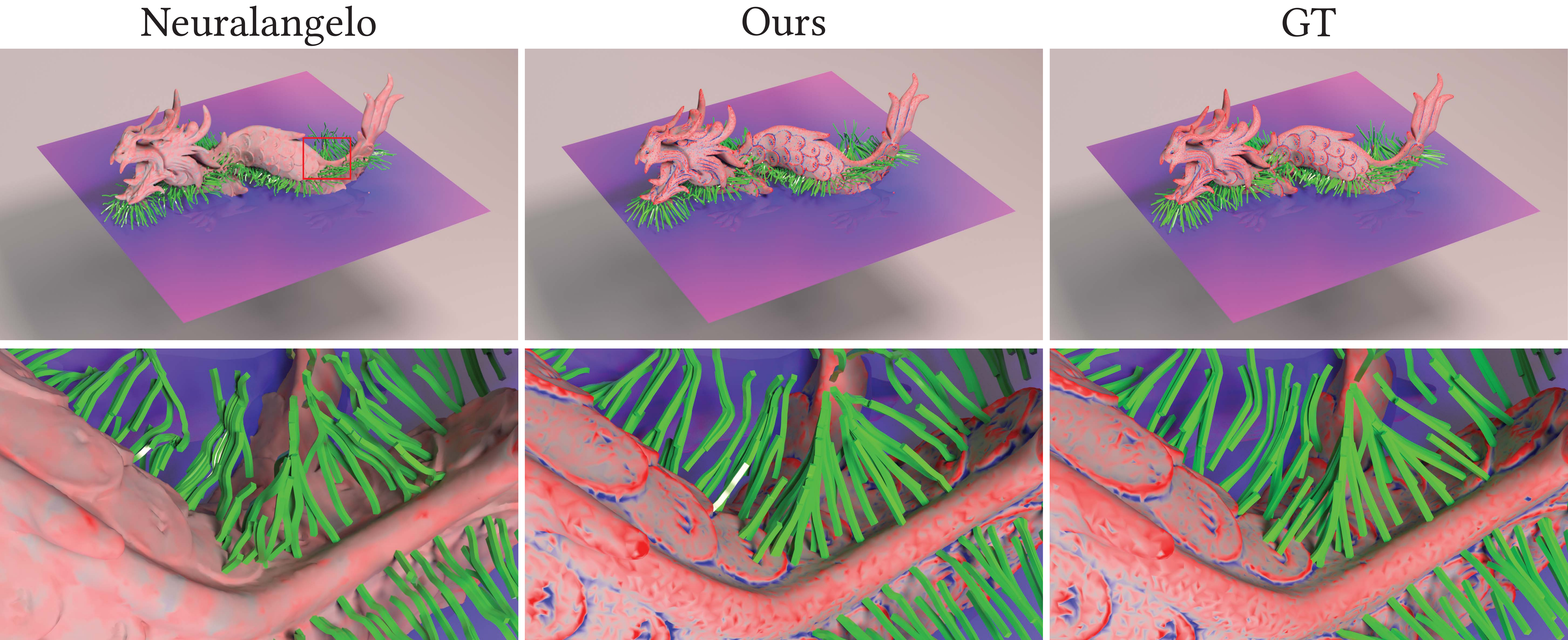}
    \caption{\textbf{SDF Learning and Curvature Estimation.} Hermite-NGP learns an SDF whose gradients (green line segments) and curvature (shown as color) remain smooth thanks to analytic second derivatives, whereas NeuralAngelo's finite-difference curvature is noisy and exhibits visible artifacts.}
    \label{fig:sdf}
\end{figure}
\paragraph{SDF learning.}
Table~\ref{tab:complex3d} reports the mean absolute error (MAE) of the  gradient, where Hermite-NGP attains a $2.4{\times}$ lower error than NeuralAngelo~\citep{li2023neuralangelo} on average. As shown in Fig.~\ref{fig:sdf}, our analytic second derivatives produce smooth and accurate curvature shown as color on mesh, whereas
NeuralAngelo’s finite-difference approximations yield noisy curvature with visible artifacts, a critical limitation for downstream tasks such as mesh extraction and physics simulation.

\begin{table}[t]
\centering
\small
\caption{\textbf{3D with Complex Geometry.} Poisson with mesh boundary (MAE $\downarrow$) and SDF learning (Grad MAE $\downarrow$).}
\label{tab:complex3d}
\begin{tabular}{l|cc|cc}
\toprule
& \multicolumn{2}{c|}{3D Poisson: L2 $\downarrow$} & \multicolumn{2}{c}{SDF: Grad. MAE $\downarrow$} \\
Mesh & Ours & PIG & Ours & NeuralAngelo \\
\midrule
Armadillo & \textbf{0.0055} & 0.0167 & \textbf{0.0478} & 0.1009 \\
Bunny     & \textbf{0.0044} & 0.0127 & \textbf{0.0416} & 0.0887 \\
Fandisk   & \textbf{0.0031} & 0.0100 & \textbf{0.0516} & 0.1064 \\
Lucy      & --              & --     & \textbf{0.0418} & 0.1213 \\
Dragon    & --              & --     & \textbf{0.0453} & 0.1322 \\
\midrule
Average   & \textbf{0.0043} & 0.0131 & \textbf{0.0456} & 0.1099 \\
\bottomrule
\end{tabular}
\end{table}

\vspace{-10pt}

\subsection{Image Reconstruction from Gradient Supervision}
\label{sec:image_recon}

To show that the gradient-augmented hash representation generalizes beyond PDE residuals, we reconstruct the \texttt{camera} image from \emph{gradient}-only supervision following~\citet{kairanda2026dinf}. Hermite-NGP attains the best PSNR at both resolutions ($32.56$\,dB at $256{\times}256$, $32.35$\,dB at $512{\times}512$), ahead of $\partial^\infty$-Grid~\citep{kairanda2026dinf} and SIREN~\citep{sitzmann2020implicit} (Fig.~\ref{fig:image_recon_compare}; best result in Appendix Table~\ref{tab:image_recon}).

\begin{figure}[h]
\centering
\includegraphics[width=1.0\linewidth]{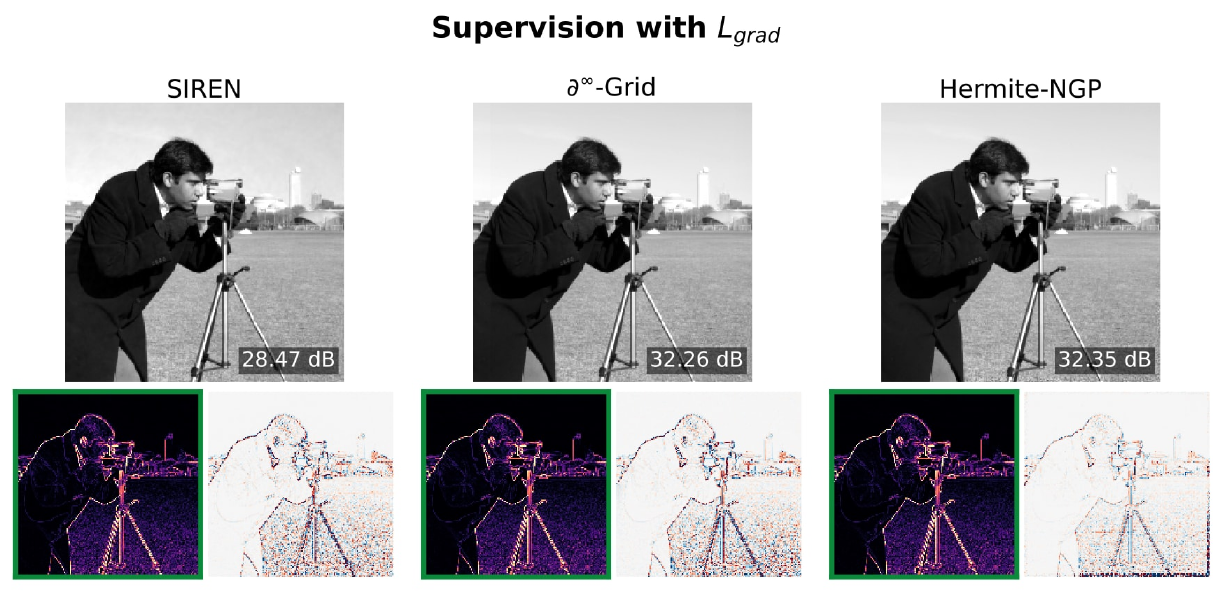}
\caption{Image reconstruction from \emph{gradient} supervision on the \texttt{camera} image. Hermite-NGP recovers sharper edges and finer texture than $\partial^\infty$-Grid and SIREN while matching their soft-region quality.}
\label{fig:image_recon_compare}
\end{figure}

\vspace{-10pt}

\subsection{GPU Memory and Speed Comparison}
Hermite-NGP scales efficiently: 68K to 16.8M parameters uses 133--389\,MB memory and 1.8--3.6\,ms/epoch
(Table~\ref{tab:memory}). INGP-FD with similar parameter counts (81K--11.2M) uses 5.4--241\,MB and 2.5--8.7\,ms/epoch.
PIG scales poorly: 1600 Gaussians (32K params) requires 33.5\,GB memory and 5\,s/epoch. Full details are provided in Appendix Table~\ref{tab:memory}.

\paragraph{3D Training Memory vs.~INGP-FD.}
Despite the $2^d$ storage overhead, in 3D Hermite-NGP uses \emph{less} peak GPU memory than INGP-FD across the tested range ($150$K--$33.6$M params), because INGP-FD's $7$ forward passes for the central-difference Laplacian retain $7\times$ activation graphs while Hermite-NGP keeps a single graph (Appendix Table~\ref{tab:mem3d}).

\vspace{-10pt}
\begin{figure}[t]
    \centering
\captionsetup{belowskip=-17pt}
    \includegraphics[width=1.0\linewidth]{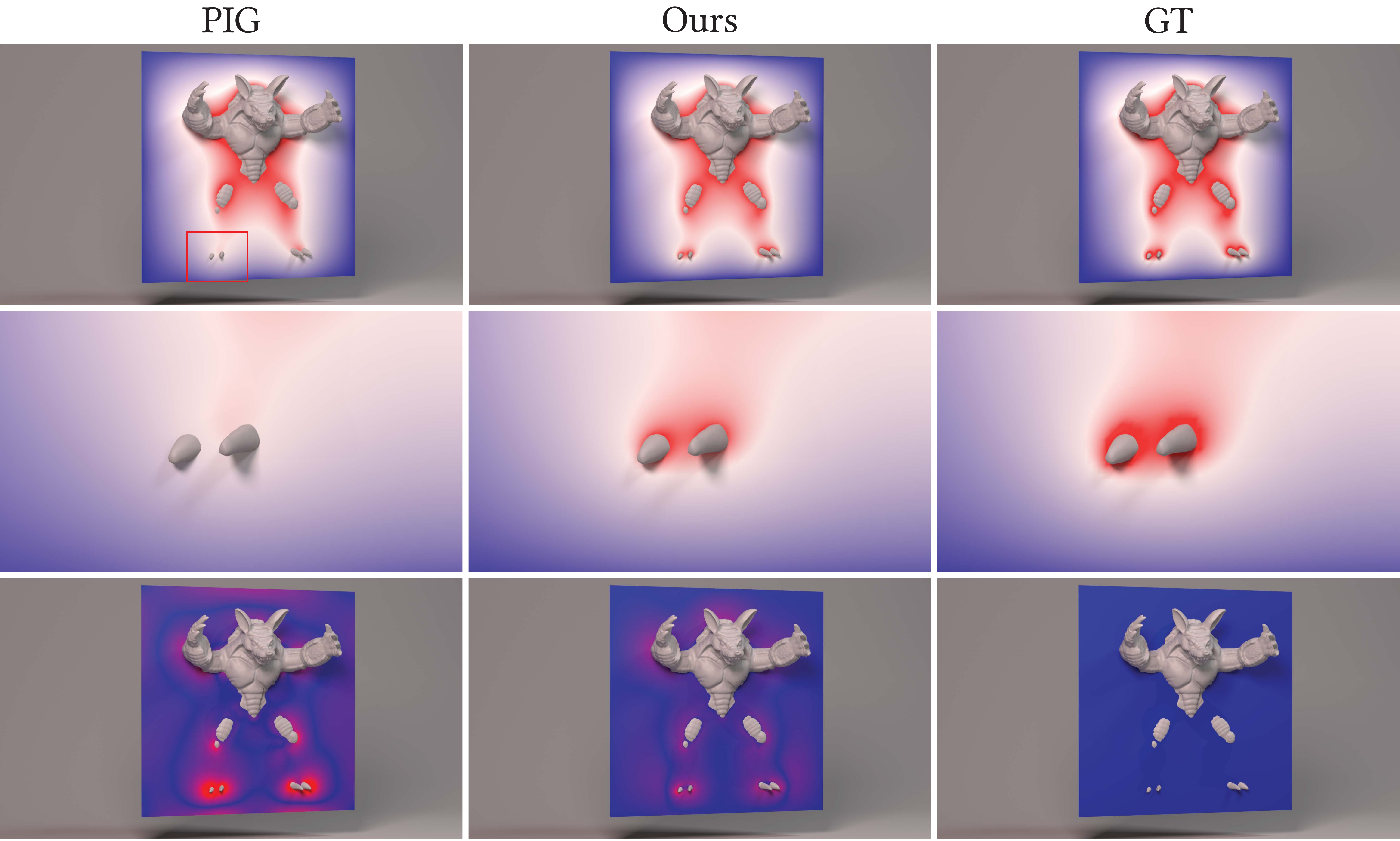}
    \caption{3D Poisson with mesh boundary (Armadillo). Solution $u$ with Dirichlet values $u{=}1$ on the mesh surface and $u{=}0$ on the outer domain boundary. Hermite-NGP achieves an $L^2$ error of $5\times10^{-3}$, a $3\times$ improvement over PIG. The third row shows L2 error.}
    \label{fig:poisson}
\end{figure}

\subsection{Ablation Studies}
\label{sec:ablation}

We conduct ablation studies on Helmholtz 2D ($a=5, 10, 15, 20$) to analyze key design choices. We report key findings here and refer to Appendix~\ref{app:ablation} for complete results.

\vspace{-10pt}

\paragraph{Cubic-NGP Baseline.}
We evaluate higher-order NGP variants (Trilinear, Cubic, Bicubic Catmull--Rom) that differentiate through the encoding without stored derivatives. On Helmholtz 2D ($a{=}10$), all variants fail (L2 $>0.1$), including true $4{\times}4$ Bicubic, while Hermite-NGP achieves $1.81\times10^{-5}$. The underlying failure mode is that hash collisions inject high-frequency noise into the stored feature values; computing derivatives through (or by FD on) these noisy features amplifies the noise. Hermite-NGP instead treats the derivative field as a first-class optimization target, distributing the representation burden across function and derivative channels. Figure~\ref{fig:ngpcubic_helm_ablation} provides a visual comparison and confirms that stored derivative coefficients are essential.

\vspace{-10pt}

\paragraph{Coarse-to-Fine Training.}

We evaluate several multigrid-inspired resolution schedules. Our coarse-to-fine (C2F) strategy reduces error by 79.2\% relative to training without any scheduling, and consistently outperforms both V- and W-cycle variants, which yield smaller improvements. Detailed results for all 28 configurations are provided in Appendix~\ref{app:c2f_ablation}.

\vspace{-10pt}

\paragraph{Hash Table Allocation.}~\label{subsec:hash-ablation}
We ablate hash capacity for Hermite coefficients: values ($H_1$), first ($H_2$), and mixed ($H_3$) derivatives. The optimal $14$–$14$–$10$ (log$_2$) achieves $2.26\times10^{-5}$ L2 error, a 56\% reduction over uniform $12$–$12$–$12$. Reducing $H_2$ to 10 degrades error to $9.98\times10^{-5}$, while reducing $H_1$ yields $8.90\times10^{-5}$, showing higher collision sensitivity for first derivatives. Appendix~\ref{app:hash_allocation} reports all 30 settings.

\vspace{-10pt}

\paragraph{Architecture Parameters.}
We ablate hash size, resolution levels, and per-level scale on 2D Helmholtz ($a{=}5,10,20$). Performance is frequency-dependent: at $a{=}5$, hash $2^{16}$ achieves $6.52\times10^{-6}$; at $a{=}10$, a compact model yields $5.29\times10^{-5}$, improved to $1.81\times10^{-5}$ with hash $2^{14}$; at $a{=}20$, a compact model yields $9.87\times10^{-5}$, improved to $7.93\times10^{-5}$ with hash $2^{16}$. Scale $2.0$ and 8 levels perform robustly across frequencies. Full sweeps are in Appendix~\ref{app:architecture_ablation}.

\vspace{-10pt}

\paragraph{MLP Depth.}
On Helmholtz 2D ($a{=}20$), depth $4$ improves error by $1.7{\times}$ over the default depth $2$ at a cost of $\sim$0.9\,ms/epoch per layer; the hash encoding carries most of the representational burden, so we keep $d{=}2$ as default (Appendix Table~\ref{tab:depth}).

\vspace{-10pt}

\paragraph{Computational Efficiency.}
Training takes 2.85--3.13 ms per epoch with 10K collocation points, broken down into 0.35--0.41 ms for PDE residuals, 0.23--0.26 ms for boundary conditions, and 1.98--2.17 ms for the backward pass. With 100K points, each epoch takes 19.7 ms, of which the backward pass accounts for 16.5 ms (84\%). See Appendix~\ref{app:speed} for a full scaling analysis across 5K--100K collocation points.

\vspace{-10pt}

\paragraph{Robustness Across Seeds.}
Across $5$ seeds on every main benchmark, relative standard deviations stay below ${\sim}15\%$ (e.g., Helmholtz 2D $a{=}10$: $3.35{\times}10^{-5}\pm4.4{\times}10^{-6}$; $a{=}100$: $7.34{\times}10^{-2}\pm6.06{\times}10^{-3}$), confirming the gains are not seed-driven (Appendix Table~\ref{tab:multiseed}).
\vspace{-10pt}

\paragraph{Derivative Computation.}
We compare full analytic derivatives to autograd-based alternatives. Our method achieves {3--15$\times$ speedup} (9.5$\times$ avg.) over full autograd. A hybrid with analytic encoding but autograd MLP is only 1.2--1.5$\times$ slower, indicating analytic encoding derivatives dominate the speedup. Gains are largest at small batches (15$\times$ at 5K collocation) due to near-constant autograd overhead. Full results are in Appendix~\ref{app:speed}.

\vspace{-10pt}

\section{Conclusion}

We presented \textbf{Hermite-NGP}, a gradient-augmented multi-resolution hash encoding for neural PDE solving. By explicitly storing function values and mixed partial derivatives at hash grid vertices and reconstructing the field via Hermite interpolation, Hermite-NGP enables analytic, stable evaluation of first- and second-order differential operators, overcoming the discontinuities of standard NGP encodings. The resulting $C^1$-continuous representation, combined with spatial adaptivity and coarse-to-fine training, yields an efficient neural PDE solver for problems on complex geometries.

\vspace{-10pt}

\paragraph{Limitations and Future Work.}
Hermite-NGP incurs storage overhead from explicit mixed derivative parameterization, which may grow in higher
dimensions, though offset by reduced training memory from analytic derivatives. Our implementation uses SIREN for
efficient second-order derivatives; extending to other activation families remains future work. We also plan to explore quintic Hermite interpolation for PDEs requiring higher-order derivatives. Our Hermite interpolation is currently realized over multi-resolution hash encodings under strong-form PDE residuals; weak-form variants are left as future work, and our method is not intended as a substitute for classical FDM/FEM solvers.

\section*{Acknowledgments}

We sincerely thank the anonymous reviewers for their valuable feedback.
Georgia Tech authors acknowledge NSF CAREER \#2420319, IIS \#2433307, OISE \#2433313, IIS \#2433322, ECCS \#2318814, and CNS \#2450401 for funding support, and the NVIDIA Academic Grant Program for hardware support.
We credit the Houdini education license for rendering.

\section*{Impact Statement}
The goal of this work is to improve the reliability and efficiency of neural PDE solvers by providing analytic and faithful spatial derivatives, particularly in settings involving complex geometries and higher-order differential operators. We do not foresee direct negative societal consequences arising from this work.

\bibliography{references}

@article{raissi2017machine,
  title={Machine learning of linear differential equations using Gaussian processes},
  author={Raissi, Maziar and Perdikaris, Paris and Karniadakis, George Em},
  journal={Journal of Computational Physics},
  volume={348},
  pages={683--693},
  year={2017},
  publisher={Elsevier}
}

@article{raissi2019physics,
  title={Physics-informed neural networks: A deep learning framework for solving forward and inverse problems involving nonlinear partial differential equations},
  author={Raissi, Maziar and Perdikaris, Paris and Karniadakis, George E},
  journal={Journal of Computational physics},
  volume={378},
  pages={686--707},
  year={2019},
  publisher={Elsevier}
}

@article{muller2022instant,
  title={Instant neural graphics primitives with a multiresolution hash encoding},
  author={M{\"u}ller, Thomas and Evans, Alex and Schied, Christoph and Keller, Alexander},
  journal={ACM transactions on graphics (TOG)},
  volume={41},
  number={4},
  pages={1--15},
  year={2022},
  publisher={ACM New York, NY, USA}
}

@article{huang2024efficient,
  title={Efficient physics-informed neural networks using hash encoding},
  author={Huang, Xinquan and Alkhalifah, Tariq},
  journal={Journal of Computational Physics},
  volume={501},
  pages={112760},
  year={2024},
  publisher={Elsevier}
}

@inproceedings{li2023neuralangelo,
  title={Neuralangelo: High-fidelity neural surface reconstruction},
  author={Li, Zhaoshuo and M{\"u}ller, Thomas and Evans, Alex and Taylor, Russell H and Unberath, Mathias and Liu, Ming-Yu and Lin, Chen-Hsuan},
  booktitle={Proceedings of the IEEE/CVF Conference on Computer Vision and Pattern Recognition},
  pages={8456--8465},
  year={2023}
}

@inproceedings{barron2023zip,
  title={Zip-nerf: Anti-aliased grid-based neural radiance fields},
  author={Barron, Jonathan T and Mildenhall, Ben and Verbin, Dor and Srinivasan, Pratul P and Hedman, Peter},
  booktitle={Proceedings of the IEEE/CVF International Conference on Computer Vision},
  pages={19697--19705},
  year={2023}
}

@article{kerbl2023gaussian,
  title={3D Gaussian splatting for real-time radiance field rendering.},
  author={Kerbl, Bernhard and Kopanas, Georgios and Leimk{\"u}hler, Thomas and Drettakis, George},
  journal={ACM Trans. Graph.},
  volume={42},
  number={4},
  pages={139--1},
  year={2023}
}

@inproceedings{chen2022tensorf,
  title={Tensorf: Tensorial radiance fields},
  author={Chen, Anpei and Xu, Zexiang and Geiger, Andreas and Yu, Jingyi and Su, Hao},
  booktitle={European conference on computer vision},
  pages={333--350},
  year={2022},
  organization={Springer}
}

@article{kairanda2026dinf,
  title={DInf-Grid: A Neural Differential Equation Solver with Differentiable Feature Grids},
  author={Kairanda, Navami and Naik, Shanthika and Habermann, Marc and Sharma, Avinash and Theobalt, Christian and Golyanik, Vladislav},
  journal={arXiv preprint arXiv:2601.10715},
  year={2026}
}

@inproceedings{tancik2023nerfstudio,
  title={Nerfstudio: A modular framework for neural radiance field development},
  author={Tancik, Matthew and Weber, Ethan and Ng, Evonne and Li, Ruilong and Yi, Brent and Wang, Terrance and Kristoffersen, Alexander and Austin, Jake and Salahi, Kamyar and Ahuja, Abhik and others},
  booktitle={ACM SIGGRAPH 2023 conference proceedings},
  pages={1--12},
  year={2023}
}

@article{kang2025pig,
  title={PIG: Physics-informed gaussians as adaptive parametric mesh representations},
  author={Kang, Namgyu and Oh, Jaemin and Hong, Youngjoon and Park, Eunbyung},
  journal={arXiv preprint arXiv:2412.05994},
  year={2024}
}

@inproceedings{kang2023pixel,
  title={Pixel: Physics-informed cell representations for fast and accurate pde solvers},
  author={Kang, Namgyu and Lee, Byeonghyeon and Hong, Youngjoon and Yun, Seok-Bae and Park, Eunbyung},
  booktitle={Proceedings of the AAAI conference on artificial intelligence},
  volume={37},
  number={7},
  pages={8186--8194},
  year={2023}
}

@article{cho2023separable,
  title={Separable physics-informed neural networks},
  author={Cho, Junwoo and Nam, Seungtae and Yang, Hyunmo and Yun, Seok-Bae and Hong, Youngjoon and Park, Eunbyung},
  journal={Advances in Neural Information Processing Systems},
  volume={36},
  pages={23761--23788},
  year={2023}
}

@article{sitzmann2020implicit,
  title={Implicit neural representations with periodic activation functions},
  author={Sitzmann, Vincent and Martel, Julien and Bergman, Alexander and Lindell, David and Wetzstein, Gordon},
  journal={Advances in neural information processing systems},
  volume={33},
  pages={7462--7473},
  year={2020}
}

@article{zhongkai2024pinnacle,
  title={Pinnacle: A comprehensive benchmark of physics-informed neural networks for solving pdes},
  author={Zhongkai, Hao and Yao, Jiachen and Su, Chang and Su, Hang and Wang, Ziao and Lu, Fanzhi and Xia, Zeyu and Zhang, Yichi and Liu, Songming and Lu, Lu and others},
  journal={Advances in Neural Information Processing Systems},
  volume={37},
  pages={76721--76774},
  year={2024}
}

@inproceedings{rahaman2019spectral,
  title={On the spectral bias of neural networks},
  author={Rahaman, Nasim and Baratin, Aristide and Arpit, Devansh and Draxler, Felix and Lin, Min and Hamprecht, Fred and Bengio, Yoshua and Courville, Aaron},
  booktitle={International conference on machine learning},
  pages={5301--5310},
  year={2019},
  organization={PMLR}
}

@article{tancik2020fourier,
  title={Fourier features let networks learn high frequency functions in low dimensional domains},
  author={Tancik, Matthew and Srinivasan, Pratul and Mildenhall, Ben and Fridovich-Keil, Sara and Raghavan, Nithin and Singhal, Utkarsh and Ramamoorthi, Ravi and Barron, Jonathan and Ng, Ren},
  journal={Advances in neural information processing systems},
  volume={33},
  pages={7537--7547},
  year={2020}
}

@article{hermite1878formule,
  title={Sur la formule d'interpolation de Lagrange},
  author={Hermite, M Ch and Borchardt, M},
  journal={Journal f{\"u}r die reine und angewandte Mathematik (Crelles Journal)},
  volume={1878},
  number={84},
  pages={70--79},
  year={1878},
  publisher={De Gruyter}
}

@article{birkhoff1968piecewise,
  title={Piecewise Hermite interpolation in one and two variables with applications to partial differential equations},
  author={Birkhoff, Garrett and Schultz, Martin H and Varga, Richard S},
  journal={Numer. math},
  volume={11},
  number={3},
  pages={232--256},
  year={1968}
}

@article{ciarlet1991general,
  title={General Lagrange and Hermite interpolation in Rn with applications to finite element methods},
  author={Ciarlet, Philippe G and Raviart, Pierre-Arnaud},
  journal={Archive for Rational Mechanics and Analysis},
  volume={46},
  number={3},
  pages={177--199},
  year={1972},
  publisher={Springer}
}

@article{mildenhall2021nerf,
  title={Nerf: Representing scenes as neural radiance fields for view synthesis},
  author={Mildenhall, Ben and Srinivasan, Pratul P and Tancik, Matthew and Barron, Jonathan T and Ramamoorthi, Ravi and Ng, Ren},
  journal={Communications of the ACM},
  volume={65},
  number={1},
  pages={99--106},
  year={2021},
  publisher={ACM New York, NY, USA}
}

@article{karniadakis2021physics,
  title={Physics-informed machine learning},
  author={Karniadakis, George Em and Kevrekidis, Ioannis G and Lu, Lu and Perdikaris, Paris and Wang, Sifan and Yang, Liu},
  journal={Nature Reviews Physics},
  volume={3},
  number={6},
  pages={422--440},
  year={2021},
  publisher={Nature Publishing Group UK London}
}

@article{costabal2024delta,
  title={$\Delta$-PINNs: Physics-informed neural networks on complex geometries},
  author={Costabal, Francisco Sahli and Pezzuto, Simone and Perdikaris, Paris},
  journal={Engineering Applications of Artificial Intelligence},
  volume={127},
  pages={107324},
  year={2024},
  publisher={Elsevier}
}

@article{hu2024tackling,
  title={Tackling the curse of dimensionality with physics-informed neural networks},
  author={Hu, Zheyuan and Shukla, Khemraj and Karniadakis, George Em and Kawaguchi, Kenji},
  journal={Neural Networks},
  volume={176},
  pages={106369},
  year={2024},
  publisher={Elsevier}
}

@article{lu2021deepxde,
  title={DeepXDE: A deep learning library for solving differential equations},
  author={Lu, Lu and Meng, Xuhui and Mao, Zhiping and Karniadakis, George Em},
  journal={SIAM review},
  volume={63},
  number={1},
  pages={208--228},
  year={2021},
  publisher={SIAM}
}

@article{wang2022and,
  title={When and why PINNs fail to train: A neural tangent kernel perspective},
  author={Wang, Sifan and Yu, Xinling and Perdikaris, Paris},
  journal={Journal of Computational Physics},
  volume={449},
  pages={110768},
  year={2022},
  publisher={Elsevier}
}

@article{krishnapriyan2021characterizing,
  title={Characterizing possible failure modes in physics-informed neural networks},
  author={Krishnapriyan, Aditi and Gholami, Amir and Zhe, Shandian and Kirby, Robert and Mahoney, Michael W},
  journal={Advances in neural information processing systems},
  volume={34},
  pages={26548--26560},
  year={2021}
}

@book{briggs2000multigrid,
  title={A multigrid tutorial},
  author={Briggs, William L and Henson, Van Emden and McCormick, Steve F},
  year={2000},
  publisher={SIAM}
}

@article{wang2021understanding,
  title={Understanding and mitigating gradient flow pathologies in physics-informed neural networks},
  author={Wang, Sifan and Teng, Yujun and Perdikaris, Paris},
  journal={SIAM Journal on Scientific Computing},
  volume={43},
  number={5},
  pages={A3055--A3081},
  year={2021},
  publisher={SIAM}
}

@article{lu2021learning,
  title={Learning nonlinear operators via DeepONet based on the universal approximation theorem of operators},
  author={Lu, Lu and Jin, Pengzhan and Pang, Guofei and Zhang, Zhongqiang and Karniadakis, George Em},
  journal={Nature machine intelligence},
  volume={3},
  number={3},
  pages={218--229},
  year={2021},
  publisher={Nature Publishing Group UK London}
}

@inproceedings{li2021fourier,
  title={Fourier Neural Operator for Parametric Partial Differential Equations},
  author={Li, Zongyi and Kovachki, Nikola and Azizzadenesheli, Kamyar and Liu, Burigede and Bhattacharya, Kaushik and Stuart, Andrew and Anandkumar, Anima},
  booktitle={International Conference on Learning Representations},
  year={2021}
}

@article{wang2024respecting,
  title={Respecting causality for training physics-informed neural networks},
  author={Wang, Sifan and Sankaran, Shyam and Perdikaris, Paris},
  journal={Computer Methods in Applied Mechanics and Engineering},
  volume={421},
  pages={116813},
  year={2024},
  publisher={Elsevier}
}

@article{mcclenny2023self,
  title={Self-adaptive physics-informed neural networks},
  author={McClenny, Levi D and Braga-Neto, Ulisses M},
  journal={Journal of Computational Physics},
  volume={474},
  pages={111722},
  year={2023},
  publisher={Elsevier}
}

@inproceedings{duancopinn,
  title={CoPINN: Cognitive Physics-Informed Neural Networks},
  author={Duan, Siyuan and Wu, Wenyuan and Hu, Peng and Ren, Zhenwen and Peng, Dezhong and Sun, Yuan},
  booktitle={Forty-second International Conference on Machine Learning}
}

@article{raissi2020hidden,
  title={Hidden fluid mechanics: Learning velocity and pressure fields from flow visualizations},
  author={Raissi, Maziar and Yazdani, Alireza and Karniadakis, George Em},
  journal={Science},
  volume={367},
  number={6481},
  pages={1026--1030},
  year={2020},
  publisher={American Association for the Advancement of Science}
}

@inproceedings{chen2018gradnorm,
  title={Gradnorm: Gradient normalization for adaptive loss balancing in deep multitask networks},
  author={Chen, Zhao and Badrinarayanan, Vijay and Lee, Chen-Yu and Rabinovich, Andrew},
  booktitle={International conference on machine learning},
  pages={794--803},
  year={2018},
  organization={PMLR}
}

@article{wang2023expert,
  title={An expert's guide to training physics-informed neural networks},
  author={Wang, Sifan and Sankaran, Shyam and Wang, Hanwen and Perdikaris, Paris},
  journal={arXiv preprint arXiv:2308.08468},
  year={2023}
}

@article{wang2024piratenets,
  title={Piratenets: Physics-informed deep learning with residual adaptive networks},
  author={Wang, Sifan and Li, Bowen and Chen, Yuhan and Perdikaris, Paris},
  journal={Journal of Machine Learning Research},
  volume={25},
  number={402},
  pages={1--51},
  year={2024}
}

@inproceedings{fridovich2022plenoxels,
  title={Plenoxels: Radiance fields without neural networks},
  author={Fridovich-Keil, Sara and Yu, Alex and Tancik, Matthew and Chen, Qinhong and Recht, Benjamin and Kanazawa, Angjoo},
  booktitle={Proceedings of the IEEE/CVF conference on computer vision and pattern recognition},
  pages={5501--5510},
  year={2022}
}

@inproceedings{fridovich2023k,
  title={K-planes: Explicit radiance fields in space, time, and appearance},
  author={Fridovich-Keil, Sara and Meanti, Giacomo and Warburg, Frederik Rahb{\ae}k and Recht, Benjamin and Kanazawa, Angjoo},
  booktitle={Proceedings of the IEEE/CVF Conference on Computer Vision and Pattern Recognition},
  pages={12479--12488},
  year={2023}
}

@inproceedings{cao2023hexplane,
  title={Hexplane: A fast representation for dynamic scenes},
  author={Cao, Ang and Johnson, Justin},
  booktitle={Proceedings of the IEEE/CVF Conference on Computer Vision and Pattern Recognition},
  pages={130--141},
  year={2023}
}

@article{kim2024neuralvdb,
  title={Neuralvdb: High-resolution sparse volume representation using hierarchical neural networks},
  author={Kim, Doyub and Lee, Minjae and Museth, Ken},
  journal={ACM Transactions on Graphics},
  volume={43},
  number={2},
  pages={1--21},
  year={2024},
  publisher={ACM New York, NY}
}

@article{chen2025neural,
  title={A Neural Particle Level Set Method for Dynamic Interface Tracking},
  author={Chen, Duowen and Zhou, Junwei and Zhu, Bo},
  journal={ACM Transactions on Graphics},
  volume={44},
  number={3},
  pages={1--21},
  year={2025},
  publisher={ACM New York, NY}
}

@inproceedings{zou2024triplane,
  title={Triplane meets gaussian splatting: Fast and generalizable single-view 3d reconstruction with transformers},
  author={Zou, Zi-Xin and Yu, Zhipeng and Guo, Yuan-Chen and Li, Yangguang and Liang, Ding and Cao, Yan-Pei and Zhang, Song-Hai},
  booktitle={Proceedings of the IEEE/CVF conference on computer vision and pattern recognition},
  pages={10324--10335},
  year={2024}
}

@article{yu2022monosdf,
  title={Monosdf: Exploring monocular geometric cues for neural implicit surface reconstruction},
  author={Yu, Zehao and Peng, Songyou and Niemeyer, Michael and Sattler, Torsten and Geiger, Andreas},
  journal={Advances in neural information processing systems},
  volume={35},
  pages={25018--25032},
  year={2022}
}

@inproceedings{chen2023mobilenerf,
  title={Mobilenerf: Exploiting the polygon rasterization pipeline for efficient neural field rendering on mobile architectures},
  author={Chen, Zhiqin and Funkhouser, Thomas and Hedman, Peter and Tagliasacchi, Andrea},
  booktitle={Proceedings of the IEEE/CVF Conference on Computer Vision and Pattern Recognition},
  pages={16569--16578},
  year={2023}
}

@inproceedings{wang2023neus2,
  title={Neus2: Fast learning of neural implicit surfaces for multi-view reconstruction},
  author={Wang, Yiming and Han, Qin and Habermann, Marc and Daniilidis, Kostas and Theobalt, Christian and Liu, Lingjie},
  booktitle={Proceedings of the IEEE/CVF International Conference on Computer Vision},
  pages={3295--3306},
  year={2023}
}

@inproceedings{hu2023tri,
  title={Tri-miprf: Tri-mip representation for efficient anti-aliasing neural radiance fields},
  author={Hu, Wenbo and Wang, Yuling and Ma, Lin and Yang, Bangbang and Gao, Lin and Liu, Xiao and Ma, Yuewen},
  booktitle={Proceedings of the IEEE/CVF International Conference on Computer Vision},
  pages={19774--19783},
  year={2023}
}

@inproceedings{chen2023neurbf,
  title={Neurbf: A neural fields representation with adaptive radial basis functions},
  author={Chen, Zhang and Li, Zhong and Song, Liangchen and Chen, Lele and Yu, Jingyi and Yuan, Junsong and Xu, Yi},
  booktitle={Proceedings of the IEEE/CVF International Conference on Computer Vision},
  pages={4182--4194},
  year={2023}
}

@inproceedings{chetan2025accurate,
  title={Accurate differential operators for hybrid neural fields},
  author={Chetan, Aditya and Yang, Guandao and Wang, Zichen and Marschner, Steve and Hariharan, Bharath},
  booktitle={Proceedings of the Computer Vision and Pattern Recognition Conference},
  pages={530--539},
  year={2025}
}

@inproceedings{wang2024neural,
  title={Neural physical simulation with multi-resolution hash grid encoding},
  author={Wang, Haoxiang and Yu, Tao and Yang, Tianwei and Qiao, Hui and Dai, Qionghai},
  booktitle={Proceedings of the AAAI Conference on Artificial Intelligence},
  volume={38},
  number={6},
  pages={5410--5418},
  year={2024}
}

@article{nave2010gradient,
  title={A gradient-augmented level set method with an optimally local, coherent advection scheme},
  author={Nave, Jean-Christophe and Rosales, Rodolfo Ruben and Seibold, Benjamin},
  journal={Journal of Computational Physics},
  volume={229},
  number={10},
  pages={3802--3827},
  year={2010},
  publisher={Elsevier}
}

@article{jiang2015affine,
  title={The affine particle-in-cell method},
  author={Jiang, Chenfanfu and Schroeder, Craig and Selle, Andrew and Teran, Joseph and Stomakhin, Alexey},
  journal={ACM Transactions on Graphics (TOG)},
  volume={34},
  number={4},
  pages={1--10},
  year={2015},
  publisher={ACM New York, NY, USA}
}

@article{deng2023fluid,
  title={Fluid simulation on neural flow maps},
  author={Deng, Yitong and Yu, Hong-Xing and Zhang, Diyang and Wu, Jiajun and Zhu, Bo},
  journal={ACM Transactions on Graphics (TOG)},
  volume={42},
  number={6},
  pages={1--21},
  year={2023},
  publisher={ACM New York, NY, USA}
}

@article{li2023garm,
  title={Garm-ls: A gradient-augmented reference-map method for level-set fluid simulation},
  author={Li, Xingqiao and Ni, Xingyu and Zhu, Bo and Wang, Bin and Chen, Baoquan},
  journal={ACM Transactions on Graphics (TOG)},
  volume={42},
  number={6},
  pages={1--20},
  year={2023},
  publisher={ACM New York, NY, USA}
}

@article{bockmann2014gradient,
  title={A gradient augmented level set method for unstructured grids},
  author={B{\o}ckmann, Arne and Vartdal, Magnus},
  journal={Journal of Computational Physics},
  volume={258},
  pages={47--72},
  year={2014},
  publisher={Elsevier}
}

@article{kolomenskiy2016adaptive,
  title={Adaptive gradient-augmented level set method with multiresolution error estimation},
  author={Kolomenskiy, Dmitry and Nave, Jean-Christophe and Schneider, Kai},
  journal={Journal of Scientific Computing},
  volume={66},
  pages={116--140},
  year={2016},
  publisher={Springer}
}

@article{anumolu2018gradient,
  title={Gradient augmented level set method for phase change simulations},
  author={Anumolu, Lakshman and Trujillo, Mario F},
  journal={Journal of Computational Physics},
  volume={353},
  pages={377--406},
  year={2018},
  publisher={Elsevier}
}

@article{kolahdouz2013semi,
  title={A semi-implicit gradient augmented level set method},
  author={Kolahdouz, Ebrahim M and Salac, David},
  journal={SIAM Journal on Scientific Computing},
  volume={35},
  number={1},
  pages={A231--A254},
  year={2013},
  publisher={SIAM}
}

@article{lee2014narrow,
  title={A narrow-band gradient-augmented level set method for multiphase incompressible flow},
  author={Lee, Curtis and Dolbow, John and Mucha, Peter J},
  journal={Journal of Computational Physics},
  volume={273},
  pages={12--37},
  year={2014},
  publisher={Elsevier}
}

@article{mercier2020characteristic,
  title={The characteristic mapping method for the linear advection of arbitrary sets},
  author={Mercier, Olivier and Yin, Xi-Yuan and Nave, Jean-Christophe},
  journal={SIAM Journal on Scientific Computing},
  volume={42},
  number={3},
  pages={A1663--A1685},
  year={2020},
  publisher={SIAM}
}

@article{zhou2024eulerian,
  title={Eulerian-lagrangian fluid simulation on particle flow maps},
  author={Zhou, Junwei and Chen, Duowen and Deng, Molin and Deng, Yitong and Sun, Yuchen and Wang, Sinan and Xiong, Shiying and Zhu, Bo},
  journal={ACM Transactions on Graphics (TOG)},
  volume={43},
  number={4},
  pages={1--20},
  year={2024},
  publisher={ACM New York, NY, USA}
}

@inproceedings{sommer2022gradient,
  title={Gradient-sdf: A semi-implicit surface representation for 3d reconstruction},
  author={Sommer, Christiane and Sang, Lu and Schubert, David and Cremers, Daniel},
  booktitle={Proceedings of the IEEE/CVF Conference on Computer Vision and Pattern Recognition},
  pages={6280--6289},
  year={2022}
}

@inproceedings{huang20242d,
  title={2d gaussian splatting for geometrically accurate radiance fields},
  author={Huang, Binbin and Yu, Zehao and Chen, Anpei and Geiger, Andreas and Gao, Shenghua},
  booktitle={ACM SIGGRAPH 2024 conference papers},
  pages={1--11},
  year={2024}
}

@article{virtanen2020scipy,
  title={SciPy 1.0: fundamental algorithms for scientific computing in Python},
  author={Virtanen, Pauli and Gommers, Ralf and Oliphant, Travis E and Haberland, Matt and Reddy, Tyler and Cournapeau, David and Burovski, Evgeni and Peterson, Pearu and Weckesser, Warren and Bright, Jonathan and others},
  journal={Nature methods},
  volume={17},
  number={3},
  pages={261--272},
  year={2020},
  publisher={Nature Publishing Group US New York}
}

@article{kingma2014adam,
  title={Adam: A method for stochastic optimization},
  author={Kingma, Diederik P},
  journal={arXiv preprint arXiv:1412.6980},
  year={2014}
}
\bibliographystyle{icml2026}

\newpage
\appendix
\onecolumn

\section{Analytic Derivative Computation}
\label{app:analytic_deriv}

Here, we provide complete derivations of the analytic derivative formulas used in Section~\ref{sec:analytic_deriv} of the main text.
Our neural graphics primitives (NGP) network architecture consists of two components: a Hermite interpolation--based multi-resolution representation and a SIREN-based MLP network.
We first present the analytic derivatives for the Hermite interpolation--based multi-resolution representation, and then derive the corresponding formulas for the SIREN network.
Finally, we combine these results to obtain the complete analytic derivatives of the full model.


\subsection{Full Derivation of Encoding Derivatives}
\label{app:encoding_derivation}

We derive the first and second derivatives of the Hermite encoding in detail, culminating in a complete 2D worked example.

\paragraph{Setup.}
Consider a single resolution level with grid spacing $\Delta x$. For a query point $\mathbf{x} = (x, y)$ in 2D, let the enclosing cell have corners at $(x_0, y_0)$, $(x_0 + \Delta x, y_0)$, $(x_0, y_0 + \Delta x)$, $(x_0 + \Delta x, y_0 + \Delta x)$. We denote corners as: $(0,0) \to$ bottom-left, $(1,0) \to$ bottom-right, $(0,1) \to$ top-left, $(1,1) \to$ top-right and define local coordinates:
\begin{equation}
s = \frac{x - x_0}{\Delta x}, \quad t = \frac{y - y_0}{\Delta x}, \quad s, t \in [0, 1].
\end{equation}

\paragraph{2D Hermite Interpolant.}
The bicubic Hermite interpolant is:
\begin{equation}
\begin{aligned}
H[f](s,t) &= \sum_{i,j \in \{0,1\}} \Big[ f_{ij}^{(00)}H^{(00)}(s-i,t-i)+ f_{ij}^{(10)}H^{(10)}(s-i,t-i)\Delta x \\
        &\qquad + f_{ij}^{(01)}H^{(01)}(s-i,t-i) \Delta x + f_{ij}^{(11)}H^{(11)}(s-i,t-i)\Delta^2 x \Big]    
\end{aligned}
\label{eq:hermite_2d_full}
\end{equation}
where $f_{ij}$, $f^{(10)}_{ij}$, $f^{(01)}_{ij}$, $f^{(11)}_{ij}$ are the function value, $x$-derivative, $y$-derivative, and mixed derivative at corner $(i,j)$, and the basis functions are tensor products:
\begin{align}
H^{(00)}(s,t) &= h^{(0)}(s) h^{(0)}(t), \\
H^{(10)}(s,t) &= h^{(1)}(s) h^{(0)}(t), \\
H^{(01)}(s,t) &= h^{(0)}(s) h^{(1)}(t), \\
H^{(11)}(s,t) &= h^{(1)}(s) h^{(1)}(t).
\end{align}

\paragraph{First Derivatives.}
Using the chain rule $\partial/\partial x = (1/\Delta x) \partial/\partial s$:
\begin{equation}
\begin{aligned}
\frac{\partial H[f]}{\partial x} &= \frac{1}{\Delta x}\sum_{i,j \in \{0,1\}} \Big[ f_{ij}^{(00)} h^{(0)}{'}(s-i) h^{(0)}(t-i) + f_{ij}^{(10)}h^{(1)}{'}(s-i) h^{(0)}(t-i)\Delta x 
\\
 &\qquad  +  f_{ij}^{(01)}h^{(0)}{'}(s-i) h^{(1)}(t-i) \Delta x + f_{ij}^{(11)} h^{(1)}{'}(s-i) h^{(1)}(t-i) \Delta^2 x \Big]    
\end{aligned}
\label{eq:first_deriv_full}
\end{equation}

Simplifying the $\Delta x$ factors:
\begin{equation}
\begin{aligned}
\frac{\partial H[f]}{\partial x} &= \sum_{i,j \in \{0,1\}} \Big[ \frac{1}{\Delta x}f_{ij}^{(00)} h^{(0)}{'}(s-i) h^{(0)}(t-i) + f_{ij}^{(10)}h^{(1)}{'}(s-i) h^{(0)}(t-i) \\
&\qquad +  f_{ij}^{(01)}h^{(0)}{'}(s-i) h^{(1)}(t-i) + f_{ij}^{(11)} h^{(1)}{'}(s-i) h^{(1)}(t-i) \Delta x \Big]    
\end{aligned}
\end{equation}

\paragraph{Second Derivatives.}
Taking another derivative:
\begin{equation}
\begin{aligned}
\frac{\partial^2 H[f]}{\partial x^2} &= \sum_{i,j \in \{0,1\}} \Big[ \frac{1}{\Delta^2 x}f_{ij}^{(00)} h^{(0)}{'}(s-i) h^{(0)}(t-i) + \frac{1}{\Delta x}f_{ij}^{(10)}h^{(1)}{'}(s-i) h^{(0)}(t-i)\\
&\qquad+ \frac{1}{\Delta x} f_{ij}^{(01)}h^{(0)}{'}(s-i) h^{(1)}(t-i) + f_{ij}^{(11)} h^{(1)}{'}(s-i) h^{(1)}(t-i)\Big]    
\end{aligned}
\label{eq:second_deriv_full}
\end{equation}

\paragraph{Complete 2D Example with All 16 Terms.}  Here, we expand Equation~\eqref{eq:hermite_2d_full}.  The full interpolant has 16 terms (4 corners $\times$ 4 coefficients each):
\begin{equation}
\begin{aligned}
&H[f](s,t) =\\ 
&f_{ij}^{(00)}h^{(0)}(s)h^{(0)}(t) + f_{ij}^{(00)}h^{(0)}(s-1)h^{(0)}(t) + f_{ij}^{(00)}h^{(0)}(s)h^{(0)}(t-1) + f_{ij}^{(00)}h^{(0)}(s-1)h^{(0)}(t-1)+\\
&f_{ij}^{(10)}h^{(1)}(s)h^{(0)}(t)\Delta x+  f_{ij}^{(10)}h^{(1)}(s-1)h^{(0)}(t)\Delta x + f_{ij}^{(10)}h^{(1)}(s)h^{(0)}(t-1)\Delta x +  f_{ij}^{(10)}h^{(1)}(s-1)h^{(0)}(t-1)\Delta x+\\
&f_{ij}^{(01)}h^{(0)}(s)h^{(1)}(t) \Delta x +  f_{ij}^{(01)}h^{(0)}(s-1)h^{(1)}(t) \Delta x +  f_{ij}^{(01)}h^{(0)}(s)h^{(1)}(t-1) \Delta x+  f_{ij}^{(01)}h^{(0)}(s-1)h^{(1)}(t-1) \Delta x+\\
&f_{ij}^{(11)}h^{(1)}(s)h^{(1)}(t)\Delta^2 x + f_{ij}^{(11)}h^{(1)}(s-1)h^{(1)}(t)\Delta^2 x + \\
&f_{ij}^{(11)}h^{(1)}(s)h^{(1)}(t-1)\Delta^2 x + f_{ij}^{(11)}h^{(1)}(s-1)h^{(1)}(t-1)\Delta^2 x +        
\label{eq:16_terms}
\end{aligned}
\end{equation}

This formula makes explicit how each of the $2^2 \times 2^2 = 16$ coefficients stored in the hash tables contributes to the interpolated value.

\paragraph{Laplacian Computation.}
The Laplacian $\nabla^2 H[f] = \partial^2 H[f]/\partial x^2 + \partial^2 H[f]/\partial y^2$ is computed by evaluating~\eqref{eq:second_deriv_full} for both directions and summing. Each term involves:
\begin{itemize}
\item Retrieving 16 coefficients from hash tables (4 corners $\times$ 4 derivative types)
\item Evaluating 4 second-derivative basis values: $ h^{(0)}{''}(s), h^{(0)}{''}(s-1), h^{(1)}{''}(s), h^{(1)}{''}(s-1)$ (and similarly for $t$)
\item Computing tensor products and weighted sums
\end{itemize}

The factored structure allows efficient implementation: compute 1D basis values once per dimension, then form outer products.

\subsection{Multi-Layer SIREN Laplacian}
\label{app:siren_laplacian}

We prove by induction that the Laplacian propagates efficiently through a $K$-layer SIREN network, reusing forward pass quantities.

\begin{proposition}[SIREN Derivative Propagation]
\label{prop:siren_deriv}
Let $u(\mathbf{x}) = W_{K+1} a^{(K)} + b_{K+1}$ be a $K$-hidden-layer SIREN network with $a^{(0)} = \gamma(\mathbf{x})$ (the Hermite encoding) and $a^{(k)} = \sin(\omega z^{(k)})$, $z^{(k)} = W_k a^{(k-1)} + b_k$ for $k = 1, \ldots, K$. Then the spatial derivatives satisfy:
\begin{align}
\frac{\partial a^{(k)}}{\partial x_i} &= \omega \cos(\omega z^{(k)}) \odot W_k \frac{\partial a^{(k-1)}}{\partial x_i}, \label{eq:siren_first_app} \\
\frac{\partial^2 a^{(k)}}{\partial x_i^2} &= -\omega^2 a^{(k)} \odot \left(W_k \frac{\partial a^{(k-1)}}{\partial x_i}\right)^{\!2} + \omega \cos(\omega z^{(k)}) \odot W_k \frac{\partial^2 a^{(k-1)}}{\partial x_i^2}, \label{eq:siren_second_app}
\end{align}
where $\odot$ denotes element-wise multiplication.
\end{proposition}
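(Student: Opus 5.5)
The proof is a direct application of the chain rule, organized as an induction on the layer index $k$. At each step we only differentiate the composition $a^{(k)} = \sin(\omega z^{(k)})$ with the affine map $z^{(k)} = W_k a^{(k-1)} + b_k$.

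The base case is $a^{(0)} = \gamma(\mathbf{x})$. Its first and second spatial derivatives are the analytic Hermite encoding derivatives of \eqref{eq:encoding_deriv}. These exist because within each cell the encoding is a polynomial in $\mathbf{x}$. We therefore interpret the identities pointwise on cell interiors, where everything is $C^\infty$; across cell boundaries the first derivatives are continuous, since the interpolant is $C^1$.

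For the inductive step, assume $\partial a^{(k-1)}/\partial x_i$ and $\partial^2 a^{(k-1)}/\partial x_i^2$ are well defined. Because $z^{(k)}$ is affine in $a^{(k-1)}$, we have
\[
\partial z^{(k)}/\partial x_i = W_k\, \partial a^{(k-1)}/\partial x_i, \qquad \partial^2 z^{(k)}/\partial x_i^2 = W_k\, \partial^2 a^{(k-1)}/\partial x_i^2 ,
\]
and the bias $b_k$ drops out. The activation acts componentwise, so the chain rule gives $\partial a^{(k)}/\partial x_i = \omega\cos(\omega z^{(k)}) \odot \partial z^{(k)}/\partial x_i$, which is \eqref{eq:siren_first_app}.

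Differentiating this product once more in $x_i$ with the product rule yields two terms:
\[
-\omega^2 \sin(\omega z^{(k)}) \odot \bigl(\partial z^{(k)}/\partial x_i\bigr)^{2} \quad\text{and}\quad \omega\cos(\omega z^{(k)}) \odot \partial^2 z^{(k)}/\partial x_i^2 .
\]
Using $\sigma'' = -\omega^2\sigma$ and $\sin(\omega z^{(k)}) = a^{(k)}$ rewrites the first term in terms of the stored activation. Substituting the affine derivatives then gives \eqref{eq:siren_second_app}.

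The only point that needs care is the componentwise bookkeeping. The square in $(W_k\,\partial a^{(k-1)}/\partial x_i)^2$ must be understood elementwise, and it arises because each output unit $j$ depends on $\mathbf{x}$ only through the scalar $z^{(k)}_j$. I would make this explicit by first writing the scalar identity $\partial_i^2 \sin(\omega z_j) = -\omega^2 \sin(\omega z_j)(\partial_i z_j)^2 + \omega\cos(\omega z_j)\,\partial_i^2 z_j$ and then vectorizing. No cross terms between different units appear, because the activation is diagonal.

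As a concluding remark, the output layer is affine, so $\partial_i^2 u = W_{K+1}\,\partial_i^2 a^{(K)}$. Summing over $i$ then gives the Laplacian. With $K=1$ this recovers \eqref{eq:full_laplacian}.
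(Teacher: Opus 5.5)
Your proposal is correct and follows essentially the same route as the paper's proof. Both argue by induction on the layer index, apply the chain rule through the affine map $z^{(k)} = W_k a^{(k-1)} + b_k$ (so the bias drops out), use the product rule for the second derivative with $\sin(\omega z^{(k)})$ rewritten as $a^{(k)}$, and finish with $\partial_i^2 u = W_{K+1}\,\partial_i^2 a^{(K)}$. Your extra care in restricting the identities to cell interiors and in justifying the elementwise square through the scalar identity only makes explicit what the paper leaves implicit.
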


\begin{proof}
We proceed by induction on layer index $k$.

\textbf{Base case ($k=0$):} The encoding derivatives $\partial \gamma/\partial x_i$ and $\partial^2 \gamma/\partial x_i^2$ are computed analytically from the Hermite basis as derived in Appendix~\ref{app:encoding_derivation}.

\textbf{Inductive step:} Assume~\eqref{eq:siren_first_app}--\eqref{eq:siren_second_app} hold for layer $k-1$. For layer $k$:

\emph{First derivative:} By the chain rule,
\begin{align*}
\frac{\partial a^{(k)}}{\partial x_i} &= \frac{\partial}{\partial x_i} \sin(\omega z^{(k)}) \\
&= \omega \cos(\omega z^{(k)}) \odot \frac{\partial z^{(k)}}{\partial x_i} \\
&= \omega \cos(\omega z^{(k)}) \odot W_k \frac{\partial a^{(k-1)}}{\partial x_i},
\end{align*}
using $\partial z^{(k)}/\partial x_i = W_k \partial a^{(k-1)}/\partial x_i$ (since $b_k$ is constant).

\emph{Second derivative:} Differentiating again,
\begin{align*}
\frac{\partial^2 a^{(k)}}{\partial x_i^2} &= \frac{\partial}{\partial x_i} \left[ \omega \cos(\omega z^{(k)}) \odot W_k \frac{\partial a^{(k-1)}}{\partial x_i} \right] \\
&= -\omega^2 \sin(\omega z^{(k)}) \odot \frac{\partial z^{(k)}}{\partial x_i} \odot W_k \frac{\partial a^{(k-1)}}{\partial x_i} + \omega \cos(\omega z^{(k)}) \odot W_k \frac{\partial^2 a^{(k-1)}}{\partial x_i^2} \\
&= -\omega^2 a^{(k)} \odot \left(W_k \frac{\partial a^{(k-1)}}{\partial x_i}\right)^{\!2} + \omega \cos(\omega z^{(k)}) \odot W_k \frac{\partial^2 a^{(k-1)}}{\partial x_i^2}.
\end{align*}

This completes the induction. The final output derivative is:
\[
\frac{\partial^2 u}{\partial x_i^2} = W_{K+1} \frac{\partial^2 a^{(K)}}{\partial x_i^2},
\]
and the Laplacian is $\nabla^2 u = \sum_{i=1}^d \partial^2 u / \partial x_i^2$.
\end{proof}

\begin{remark}[Computational Reuse]
The key efficiency insight is that all quantities in~\eqref{eq:siren_first_app}--\eqref{eq:siren_second_app} are available from a modified forward pass:
\begin{itemize}
\item $a^{(k)} = \sin(\omega z^{(k)})$ is the standard activation
\item $\cos(\omega z^{(k)})$ can be computed alongside $\sin(\omega z^{(k)})$ at negligible extra cost
\item $W_k \partial a^{(k-1)}/\partial x_i$ uses the same weight matrices as the forward pass
\end{itemize}
No backward pass or automatic differentiation graph is needed.
\end{remark}

\subsection{Mixed Partial Derivatives}
\label{app:mixed_partials}

For PDEs involving mixed derivatives (e.g., Burgers equation, convection-diffusion), we derive the full Hessian matrix.

\paragraph{Mixed Encoding Derivative.}
The mixed second derivative of the Hermite encoding is:
\begin{equation}
\frac{\partial^2 H[f]}{\partial^2 x \partial y} = \frac{1}{\Delta^2 x} \sum_{i,j} \Big[ f^{(00)}_{ij} h^{(0)}{'}(s-i) h^{(0)}{'}(t-j) + \cdots + f^{(11)}_{ij} \Delta x \Delta y h^{(1)}{'}(s-i) h^{(1)}{'}(t-j) \Big],
\end{equation}
where we use first derivatives of basis functions in both directions.

\paragraph{Mixed Derivative Through SIREN.}
For the full network, the mixed derivative propagates as:
\begin{align}
\frac{\partial^2 a^{(k)}}{\partial x_i \partial x_j} &= -\omega^2 a^{(k)} \odot \left(W_k \frac{\partial a^{(k-1)}}{\partial x_i}\right) \odot \left(W_k \frac{\partial a^{(k-1)}}{\partial x_j}\right) + \omega \cos(\omega z^{(k)}) \odot W_k \frac{\partial^2 a^{(k-1)}}{\partial x_i \partial x_j}.
\end{align}

This allows computing the full Hessian matrix $\nabla^2 u$ analytically, which is needed for:
\begin{itemize}
\item Burgers equation: $u_t + u \cdot \nabla u = \nu \nabla^2 u$
\item Navier-Stokes: requires both Laplacian and convective derivatives
\item Geometric operators: Gaussian curvature involves determinant of Hessian
\end{itemize}

\section{Experimental Details}
\label{app:exp_details}

This appendix provides complete details for reproducing our experimental results.

\subsection{PDE Formulations}
\label{app:pde_formulations}


\subsubsection{Helmholtz 2D Equation}
The 2D Helmholtz equation models time-harmonic wave propagation:
\begin{equation}
-\Delta u(x, y) + k^2 u(x, y) = f(x, y), \quad (x, y) \in \Omega = [0, 1]^2,
\label{eq:helmholtz}
\end{equation}
with Dirichlet boundary conditions $u = g$ on $\partial\Omega$. The wave number $k = 1$ is fixed; the frequency parameter $a_1$ controls solution oscillation. We use the manufactured solution:
\begin{equation}
u^*(x, y) = \sin(a_1 \pi x) \sin(a_2 \pi y),
\end{equation}
with $a_2 = a_1$. Substituting into~\eqref{eq:helmholtz} yields the source term:
\begin{equation}
f(x, y) = \bigl(k^2 - (a_1^2 + a_2^2)\pi^2\bigr) \sin(a_1 \pi x) \sin(a_2 \pi y).
\end{equation}
Higher $a_1 \in \{10, 20, 100\}$ produces increasingly oscillatory solutions that challenge neural network spectral bias.

\subsubsection{Helmholtz 3D Equation}
The 3D extension tests scalability to higher dimensions:
\begin{equation}
-\Delta u(x, y, z) + k^2 u(x, y, z) = f(x, y, z), \quad (x, y, z) \in \Omega = [0, 1]^3,
\end{equation}
with manufactured solution:
\begin{equation}
u^*(x, y, z) = \sin(a \pi x) \sin(a \pi y) \sin(a \pi z),
\end{equation}
and corresponding source:
\begin{equation}
f(x, y, z) = \bigl(k^2 - 3a^2\pi^2\bigr) \sin(a \pi x) \sin(a \pi y) \sin(a \pi z).
\end{equation}
We test $a \in \{3, 10\}$. The $2^d = 8$ Hermite coefficients per vertex in 3D represent a larger storage overhead, making this a stress test for our approach.


\subsubsection{Convection 1+1D Equation}
The 1D convection (advection) equation describes transport phenomena:
\begin{equation}
\frac{\partial u}{\partial t} + c \frac{\partial u}{\partial x} = 0, \quad x \in [0, 2\pi], \, t \in [0, 1],
\label{eq:convection}
\end{equation}
with periodic boundary conditions $u(0, t) = u(2\pi, t)$ and initial condition $u(x, 0) = \sin(x)$. The exact solution is:
\begin{equation}
u^*(x, t) = \sin(x - ct).
\end{equation}
High velocity $c = 30$ creates sharp propagating features that require accurate first-derivative computation. For neural network training, coordinates are normalized to $[0, 1]^2$.

\subsubsection{Taylor-Green Vortex 2+1D (Navier-Stokes)}
The incompressible Navier-Stokes equations in 2D with time:
\begin{align}
\frac{\partial u}{\partial t} + u\frac{\partial u}{\partial x} + v\frac{\partial u}{\partial y} &= -\frac{\partial p}{\partial x} + \nu\left(\frac{\partial^2 u}{\partial x^2} + \frac{\partial^2 u}{\partial y^2}\right), \\
\frac{\partial v}{\partial t} + u\frac{\partial v}{\partial x} + v\frac{\partial v}{\partial y} &= -\frac{\partial p}{\partial y} + \nu\left(\frac{\partial^2 v}{\partial x^2} + \frac{\partial^2 v}{\partial y^2}\right), \\
\frac{\partial u}{\partial x} + \frac{\partial v}{\partial y} &= 0 \quad \text{(incompressibility)}.
\end{align}
Domain: $t \in [0, 1]$, $(x, y) \in [0, 2\pi]^2$. The Taylor-Green vortex provides an analytic solution:
\begin{align}
u^*(x, y, t) &= -\cos(x)\sin(y)\exp(-2\nu t), \\
v^*(x, y, t) &= \sin(x)\cos(y)\exp(-2\nu t), \\
p^*(x, y, t) &= -\tfrac{1}{4}\bigl(\cos(2x) + \cos(2y)\bigr)\exp(-4\nu t).
\end{align}
This coupled system requires accurate computation of both first derivatives (advection, pressure gradient, incompressibility) and second derivatives (viscous diffusion).

\subsubsection{Flow Mixing 2+1D}
A nonlinear advection equation with spatially-varying rotational velocity:
\begin{equation}
\frac{\partial u}{\partial t} + a(x, y)\frac{\partial u}{\partial x} + b(x, y)\frac{\partial u}{\partial y} = 0,
\end{equation}
where the velocity field describes mixing flow with angular velocity depending on radial distance:
\begin{align}
r &= \sqrt{x^2 + y^2}, \quad v_t = \frac{\tanh(r)}{\cosh^2(r)}, \\
\omega &= \frac{v_t / v_{\max}}{r}, \quad a = -\frac{v_t}{v_{\max}} \cdot \frac{y}{r}, \quad b = \frac{v_t}{v_{\max}} \cdot \frac{x}{r}.
\end{align}
Domain: $t \in [0, 4]$, $(x, y) \in [-4, 4]^2$. The exact solution:
\begin{equation}
u^*(t, x, y) = -\tanh\left(\frac{y}{2}\cos(\omega t) - \frac{x}{2}\sin(\omega t)\right).
\end{equation}
This problem develops sharp gradients as fluid parcels stretch and fold under the mixing dynamics.


For both Poisson 3D and SDF learning experiments, we evaluate on five Stanford meshes of varying geometric complexity:
\begin{itemize}
\item \textbf{Armadillo}: High-genus mesh with intricate surface details
\item \textbf{Bunny}: Classic benchmark with moderate complexity
\item \textbf{Fandisk}: CAD model with sharp edges and flat regions
\item \textbf{Lucy}: High-resolution scan with fine geometric features
\item \textbf{Dragon (xyzrgb)}: Complex topology with thin structures
\end{itemize}

\subsubsection{Poisson 3D with Mesh Boundary}
We solve the Laplace equation in a domain with an embedded mesh surface:
\begin{equation}
\Delta u = 0, \quad \mathbf{x} \in \Omega \setminus \mathcal{M},
\end{equation}
where $\Omega = [0, 1]^3$ and $\mathcal{M}$ is one of the five Stanford meshes. Boundary conditions:
\begin{align}
u &= 1 \quad \text{on } \partial\mathcal{M} \text{ (mesh surface)}, \\
u &= 0 \quad \text{on } \partial\Omega \text{ (domain boundary)}.
\end{align}
The solution is a harmonic potential field smoothly varying from 1 on the mesh surface to 0 on the domain boundary. Ground truth is computed using a high-resolution finite element solver. This benchmark tests handling of complex geometry without body-fitted meshes.

\subsubsection{SDF Learning}
We learn a signed distance function $u$ in $\Omega = [0,1]^3$ around a mesh surface $\partial\mathcal{M}$ using direct supervision against ground-truth SDF values and gradients sampled from a high-resolution reference. The loss combines a value term and a gradient term:
\begin{align}
\mathcal{L}_{\text{sdf}}  &= \|u - u_{\text{gt}}\|^2, \\
\mathcal{L}_{\text{grad}} &= \|\nabla u - \nabla u_{\text{gt}}\|^2.
\end{align}
This tests first-derivative accuracy in a purely geometric setting where the gradient field is the primary quantity of interest.

\subsection{Training Configuration}
\label{app:training_config}

Table~\ref{tab:collocation} summarizes the collocation point configurations for each benchmark.

\begin{table}[h]
\centering
\small
\caption{Collocation point configuration for each PDE benchmark.}
\label{tab:collocation}
\begin{tabular}{llccc}
\toprule
PDE & Setting & Interior Points & BC/IC Points & Epochs \\
\midrule
\multicolumn{5}{l}{\textit{Elliptic PDEs}} \\
Helmholtz 2D & $a_1 = 10$ & 10,000 & 5,000/edge & 100k \\
Helmholtz 2D & $a_1 = 20$ & 40,000 & 8,000/edge & 100k \\
Helmholtz 2D & $a_1 = 100$ & 100,000 & 25,000/edge & 100k \\
Helmholtz 3D & $a = 3$ & 40,000 & 20,000 & 100k \\
Helmholtz 3D & $a = 10$ & 40,000 & 20,000 & 100k \\
\midrule
\multicolumn{5}{l}{\textit{Time-Dependent PDEs}} \\
Convection 1+1D & $c = 30$ & 10,000 & 5,000 (IC), 5,000 (BC) & 100k \\
Taylor-Green 2+1D & $\nu = 0.01$ & 20,000 & 5,000 (IC) & 100k \\
Flow Mixing 2+1D & -- & 50,000 & 10,000 (IC), 3,000/edge & 200k \\
\midrule
\multicolumn{5}{l}{\textit{Complex Geometry (5 meshes)}} \\
Poisson 3D & mesh $\to$ 1, boundary $\to$ 0 & 50,000 & 20,000 (5,000/face) & -- \\
SDF learning & -- & 50,000 & -- & -- \\
\bottomrule
\end{tabular}
\end{table}

\paragraph{Initialization.}
Following~\citet{sitzmann2020implicit}, MLP weights are initialized from $\mathcal{U}(-\sqrt{6/n}, \sqrt{6/n})$ with
$\omega_0 = 30$ for the first layer.
Hermite hash table function values are initialized with $\mathcal{N}(0, 0.01)$; derivative coefficients are
initialized to zero, representing locally constant functions.

\paragraph{Optimizer and Learning Rate.}
All experiments use the Adam optimizer with initial learning rate $10^{-3}$ and cosine annealing schedule with warm restarts. The restart period is 10,000 iterations with restart multiplier 2 (i.e., periods of 10K, 20K, 40K, ...). Collocation points and total epochs for each benchmark are summarized in Table~\ref{tab:collocation}.

\paragraph{Loss Balancing.}
We use an adaptive gradient-based loss balancing scheme inspired by GradNorm~\citep{chen2018gradnorm}. At each iteration, we compute the gradient norms of the PDE residual loss ($g_{\text{pde}}$) and boundary condition loss ($g_{\text{bc}}$). When $g_{\text{bc}} > 10^{-8}$, the BC weight is updated via exponential moving average:
\begin{equation}
\lambda_{\text{bc}} \leftarrow 0.9 \cdot \lambda_{\text{bc}} + 0.1 \cdot \frac{g_{\text{pde}}}{g_{\text{bc}}}, \quad \lambda_{\text{bc}} \in [1, \lambda_{\max}],
\end{equation}
where $\lambda_{\max}$ is a problem-dependent cap. This balances gradient magnitudes between PDE and BC losses during training.

\paragraph{Exponential Moving Average.}
We maintain an EMA of model parameters with decay rate 0.999 for stable evaluation. Final reported errors use the EMA model weights.

\paragraph{Coarse-to-Fine Training.}
For the multi-resolution curriculum (Section 4.4), we use the Coarse-to-Fine (Equal) strategy: all $L$ resolution levels are divided into equal-duration phases, with levels activated progressively from coarse to fine. Each level is trained for $\tau = T/L$ iterations before the next level is activated, where $T$ is the total training iterations.

\paragraph{Hash Table Configuration.}
For Helmholtz 2D, we select configurations based on our ablation study (Section~\ref{sec:ablation}). For other 2D
benchmarks and 3D problems with simple geometry, we use default settings: hash table size $T = 2^{14}$, $L = 8$
resolution levels, and level ratio $r = 1.5$. For 3D problems with complex geometry (Poisson with mesh boundary, SDF
learning), we increase capacity to $T = 2^{16}$, $L = 8$ levels, and level ratio $r = 2.0$ to better capture fine
geometric details.

\paragraph{MLP Architecture.}
  For baseline methods, we use MLPs with hidden dimension 128 and 4 layers, except SPINN which uses its default
  configuration. For Hermite-NGP, we use hidden dimension 128 with 2 layers, as our hash encoding stores additional derivative coefficients; the shallower MLP balances overall memory usage while the encoding provides sufficient representational capacity. In speed benchmarks (Table~\ref{tab:memory}), all methods use identical MLP configurations (128 hidden, 2 layers) for fair comparison.

\paragraph{Hardware.}
All experiments are conducted on a single NVIDIA 4090 GPU with 24GB memory.



\subsection{Baseline Implementation Details}
\label{app:baseline_details}

All baselines use official implementations with recommended hyperparameters from the original papers:
\begin{itemize}
\item \textbf{PirateNet}~\citep{wang2024piratenets}: Official JAX implementation with 4 layers, 128 neurons.
\item \textbf{JAX-PI}~\citep{wang2023expert}: Official implementation with RAR (Residual-based Adaptive Refinement)
sampling, 4 layers, 128 neurons.
\item \textbf{PIG}~\citep{kang2025pig}: 3000 Gaussian particles with learnable positions and covariances (maximum GPU
capacity).
\item \textbf{INGP-FD}~\citep{huang2024efficient}: Same hash encoding hyperparameters as Hermite-NGP, but with
standard bilinear interpolation and finite-difference Laplacian ($\epsilon = 10^{-3}$), 4 layers, 128 neurons.
\item \textbf{SPINN}~\citep{cho2023separable}: Separable basis with default configuration (100 functions per axis, 8
layers).
\item \textbf{PIXEL}~\citep{kang2023pixel}: $256 \times 256$ learnable grid for 2D problems, 4 layers, 128 neurons.
\item \textbf{PINN}~\citep{raissi2019physics}: Standard MLP with 4 layers, 128 neurons, tanh activations.
\item \textbf{NeuralAngelo}~\citep{li2023neuralangelo}: Multi-resolution hash encoding with finite difference for
SDF learning, 2 layers, 128 neurons. We use $T = 2^{18}$ for NeuralAngelo and $T = 2^{16}$ for Hermite-NGP to match
parameter counts in SDF/curvature experiments ~\ref{sec:3d_exp}.
\item \textbf{$\partial^\infty$-Grid}~\citep{kairanda2026dinf}: Official implementation with RBF interpolation on co-located feature grids. For Helmholtz 2D ($a{=}10$) we use grid resolution $256$ with $4$ scales, learning rate $10^{-3}$, and hard boundary conditions; for image reconstruction we use the authors' default \texttt{grid\_rbf\_gradient\_image.ini} configuration at the matching resolution.
\item \textbf{SIREN}~\citep{sitzmann2020implicit}: $4$-layer, $128$-neuron MLP with sinusoidal activations and $\omega_0=30$ initialization. Used as a coordinate-network baseline for the Helmholtz $a{=}10$ and image-reconstruction experiments.
\end{itemize}

All baselines are trained with the same optimizer settings (Adam, cosine annealing) and loss balancing (GradNorm) as Hermite-NGP to ensure fair comparison.

\section{Ablation Studies}
\label{app:ablation}

We conduct comprehensive ablation studies on Helmholtz 2D to analyze the impact of key design choices.

\subsection{Coarse-to-Fine Training Ablation}
\label{app:c2f_ablation}

We compare 28 curriculum learning configurations across different strategy types and phase durations. Table~\ref{tab:c2f_full} shows results sorted by best L2 error.

\begin{table}
\centering
\caption{Curriculum learning ablation on Helmholtz 2D ($a=15$). All experiments use 100K total epochs.}
\label{tab:c2f_full}
{%
\begin{tabular}{l|r|cc|c}
\toprule
Curriculum Type & Phase Ep & Best $L_2$ & Final $L_2$ & Time (s) \\
\midrule
Coarse-to-Fine & 20K & $1.20 \times 10^{-4}$ & $1.05 \times 10^{-4}$ & 261.0 \\
Coarse-to-Fine & 40K & $1.97 \times 10^{-4}$ & $1.36 \times 10^{-4}$ & 264.1 \\
Coarse-to-Fine & 20K & $2.03 \times 10^{-4}$ & $1.40 \times 10^{-4}$ & 269.0 \\
Coarse-to-Fine & 20K & $2.22 \times 10^{-4}$ & $1.58 \times 10^{-4}$ & 266.3 \\
Coarse-to-Fine & 30K & $2.44 \times 10^{-4}$ & $1.73 \times 10^{-4}$ & 264.8 \\
Coarse-to-Fine & 20K & $2.51 \times 10^{-4}$ & $1.80 \times 10^{-4}$ & 268.4 \\
V-Cycle & 20K & $2.75 \times 10^{-4}$ & $1.94 \times 10^{-4}$ & 270.6 \\
V-Cycle & 20K & $2.75 \times 10^{-4}$ & $1.92 \times 10^{-4}$ & 265.2 \\
W-Cycle & 20K & $2.80 \times 10^{-4}$ & $1.96 \times 10^{-4}$ & 265.3 \\
V-Cycle & 10K & $2.89 \times 10^{-4}$ & $2.01 \times 10^{-4}$ & 272.2 \\
W-Cycle & 20K & $3.04 \times 10^{-4}$ & $2.09 \times 10^{-4}$ & 267.5 \\
Coarse-to-Fine & 15K & $3.14 \times 10^{-4}$ & $2.16 \times 10^{-4}$ & 266.3 \\
W-Cycle & 15K & $3.18 \times 10^{-4}$ & $2.24 \times 10^{-4}$ & 267.1 \\
Coarse-to-Fine & 10K & $3.31 \times 10^{-4}$ & $2.27 \times 10^{-4}$ & 265.5 \\
Sandwich & 20K & $3.32 \times 10^{-4}$ & $2.48 \times 10^{-4}$ & 268.9 \\
V-Cycle & 25K & $3.34 \times 10^{-4}$ & $2.57 \times 10^{-4}$ & 269.4 \\
V-Cycle & 15K & $3.72 \times 10^{-4}$ & $2.60 \times 10^{-4}$ & 269.9 \\
V-Cycle & 5K & $4.52 \times 10^{-4}$ & $3.31 \times 10^{-4}$ & 269.6 \\
W-Cycle & 25K & $4.69 \times 10^{-4}$ & $3.23 \times 10^{-4}$ & 267.1 \\
Coarse-to-Fine & 5K & $4.95 \times 10^{-4}$ & $3.40 \times 10^{-4}$ & 263.8 \\
None (Baseline) & 20K & $5.76 \times 10^{-4}$ & $4.01 \times 10^{-4}$ & 264.5 \\
Fine-to-Coarse & 20K & $6.07 \times 10^{-4}$ & $4.15 \times 10^{-4}$ & 264.6 \\
W-Cycle & 10K & $6.26 \times 10^{-4}$ & $4.33 \times 10^{-4}$ & 269.6 \\
Warmup + C2F & 20K & $6.30 \times 10^{-4}$ & $4.33 \times 10^{-4}$ & 263.6 \\
C2F Gradual & 20K & $7.14 \times 10^{-4}$ & $5.69 \times 10^{-4}$ & 265.1 \\
Oscillate & 20K & $9.31 \times 10^{-4}$ & $6.49 \times 10^{-4}$ & 267.1 \\
\bottomrule
\end{tabular}%
}
\end{table}

\paragraph{Analysis.}
Coarse-to-fine training significantly outperforms all other strategies, achieving 79\% error reduction over baseline (1.20e-4 vs 5.76e-4). V-cycle and W-cycle strategies, while better than baseline, suffer from ``forgetting'' when returning to coarse levels. Fine-to-coarse training performs poorly as high-frequency noise in early stages corrupts learning. Warmup, gradual ramping, and oscillating strategies underperform baseline, suggesting that abrupt level transitions are beneficial.

\subsection{Hash Table Allocation Ablation}
\label{app:hash_allocation}

We study how to allocate hash table capacity across the three Hermite coefficient types: values ($H_1$), first derivatives ($H_2$: $\partial u/\partial x$), and second derivatives ($H_3$: $\partial u/\partial y$). Table~\ref{tab:hash_cross} shows all 30 configurations tested on Helmholtz 2D ($a=15$), ranked by L2 error.

\begin{figure*}
    \centering
    \includegraphics[width=1.0\linewidth]{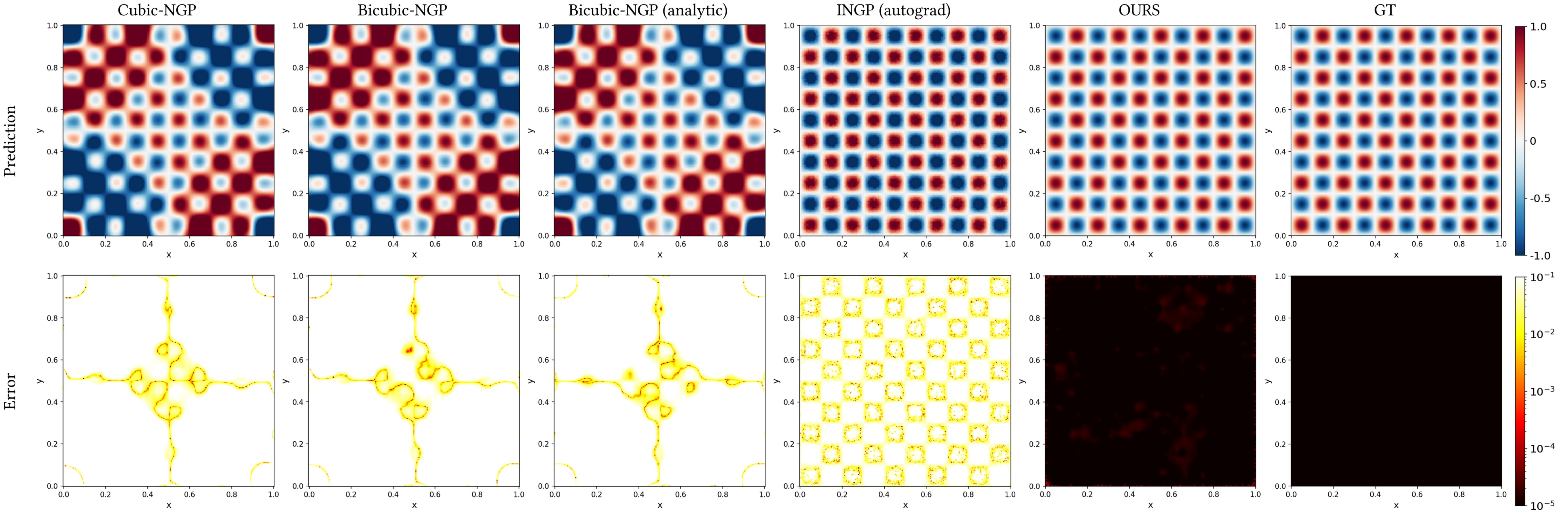}
    \caption{\textbf{Approximate Ablation.} Comparison of encoding variants on Helmholtz ($a=10$). Trilinear, Cubic, and
  Bicubic NGP all fail (L2 $>0.1$), while Hermite-NGP achieves 1.81e-5, confirming stored derivative coefficients are
  essential.}
    \label{fig:ngpcubic_helm_ablation}
\end{figure*}

\begin{table}
\centering
\caption{Complete hash table allocation ablation. $H_1$, $H_2$, $H_3$ are log$_2$ of hash table sizes for $u$, $\partial u/\partial x$, $\partial u/\partial y$ respectively.}
\label{tab:hash_cross}
{!}{%
\begin{tabular}{ccc|r|cc||ccc|r|cc}
\toprule
$H_1$ & $H_2$ & $H_3$ & Params & Best $L_2$ & Rank & $H_1$ & $H_2$ & $H_3$ & Params & Best $L_2$ & Rank \\
\midrule
14 & 14 & 10 & 822K & 2.26e-5 & 1 & 10 & 12 & 12 & 232K & 5.41e-5 & 16 \\
14 & 12 & 12 & 478K & 2.40e-5 & 2 & 14 & 10 & 14 & 576K & 5.73e-5 & 17 \\
16 & 12 & 12 & 1.26M & 2.52e-5 & 3 & 8 & 14 & 14 & 809K & 7.61e-5 & 18 \\
10 & 16 & 16 & 3.18M & 2.64e-5 & 4 & 12 & 12 & 8 & 220K & 8.27e-5 & 19 \\
12 & 14 & 12 & 674K & 2.93e-5 & 5 & 10 & 12 & 14 & 428K & 8.51e-5 & 20 \\
14 & 12 & 10 & 428K & 3.03e-5 & 6 & 16 & 12 & 8 & 1.20M & 8.78e-5 & 21 \\
12 & 16 & 10 & 2.20M & 3.47e-5 & 7 & 10 & 14 & 10 & 576K & 8.90e-5 & 22 \\
10 & 14 & 14 & 822K & 3.63e-5 & 8 & 14 & 10 & 10 & 330K & 9.98e-5 & 23 \\
8 & 16 & 16 & 3.17M & 3.66e-5 & 9 & 8 & 12 & 16 & 1.20M & 1.08e-4 & 24 \\
12 & 12 & 16 & 1.26M & 4.00e-5 & 10 & 12 & 10 & 12 & 183K & 1.13e-4 & 25 \\
12 & 12 & 10 & 232K & 4.62e-5 & 11 & 16 & 8 & 8 & 1.08M & 1.21e-4 & 26 \\
12 & 16 & 12 & 2.25M & 4.82e-5 & 12 & 8 & 12 & 12 & 220K & 1.22e-4 & 27 \\
12 & 12 & 14 & 478K & 5.11e-5 & 13 & 12 & 10 & 16 & 1.17M & 1.57e-4 & 28 \\
12 & 12 & 12 & 281K & 5.17e-5 & 14 & 10 & 10 & 14 & 330K & 1.66e-4 & 29 \\
16 & 10 & 10 & 1.12M & 5.18e-5 & 15 & 12 & 8 & 12 & 158K & 1.75e-4 & 30 \\
\bottomrule
\end{tabular}%
}
\end{table}

\paragraph{Analysis.}
Several key patterns emerge from the complete results:
\begin{itemize}
\item \textbf{$H_2$ (first derivatives) is most critical}: The top configuration uses $H_2{=}14$. Reducing $H_2$ from 14 to 10 increases error from 2.26e-5 to 9.98e-5.
\item \textbf{$H_3$ can be smaller than $H_1$, $H_2$}: The optimal $H_3{=}10$ outperforms larger $H_3$ values, suggesting the $\partial u/\partial y$ component has lower spatial variation.
\item \textbf{$H_1$ (values) has moderate sensitivity}: Reducing $H_1$ from 14 to 10 causes 294\% degradation (8.90e-5 vs 2.26e-5).
\item \textbf{Non-uniform allocation is optimal}: The best $(14, 14, 10)$ uses only 822K parameters, outperforming the larger uniform $(16, 16, 16)$ configuration.
\end{itemize}

\subsection{Architecture Parameter Ablation}
\label{app:architecture_ablation}

We ablate hash table size, number of resolution levels, and per-level scale factor across Helmholtz 2D at different frequencies ($a=5, 10, 20$). This reveals how optimal architecture depends on PDE frequency content.

\begin{table}
\centering

\caption{Architecture ablation on Helmholtz 2D ($a=5$). Top 10 configurations by L2 error.}
\label{tab:helmholtz_a5}
\begin{tabular}{ccc|r|cc}
\toprule
Hash & Scale & Lvls & Params & Best $L_2$ & Final $L_2$ \\
\midrule
16 & 2.0 & 8 & 4.21M & 6.52e-6 & 6.46e-6  \\
14 & 2.0 & 8 & 1.07M & 8.34e-6 & 8.23e-6  \\
12 & 2.0 & 8 & 281K & 8.70e-6 & 8.52e-6 \\
10 & 1.5 & 10 & 101K & 9.65e-6 & 8.63e-6  \\
14 & 1.5 & 8 & 1.07M & 1.23e-5 & 1.19e-5  \\
14 & 1.3 & 8 & 1.07M & 1.72e-5 & 1.63e-5 \\
14 & 1.8 & 8 & 1.07M & 1.73e-5 & 1.68e-5  \\
10 & 1.5 & 8 & 84K & 1.95e-5 & 1.78e-5  \\
14 & 2.0 & 6 & 805K & 1.98e-5 & 1.95e-5  \\
10 & 2.0 & 8 & 84K & 2.05e-5 & 1.86e-5 \\
\bottomrule
\end{tabular}
\end{table}

\begin{table}
\centering

\caption{Architecture ablation on Helmholtz 2D ($a=10$). Top 10 configurations by L2 error.}
\label{tab:helmholtz_a10}
\begin{tabular}{ccc|r|cc}
\toprule
Hash & Scale & Lvls & Params & Best $L_2$ & Final $L_2$  \\
\midrule
14 & 2.0 & 8 & 1.07M & 1.81e-5 & 1.66e-5   \\
16 & 2.0 & 8 & 4.21M & 3.84e-5 & 3.83e-5   \\
12 & 2.0 & 8 & 281K & 3.90e-5 & 3.37e-5  \\
14 & 2.0 & 10 & 1.33M & 4.63e-5 & 3.81e-5  \\
14 & 2.0 & 6 & 805K & 5.29e-5 & 5.08e-5  \\
14 & 1.8 & 8 & 1.07M & 5.93e-5 & 5.39e-5  \\
14 & 1.5 & 8 & 1.07M & 6.40e-5 & 5.95e-5 \\
14 & 1.8 & 6 & 805K & 9.72e-5 & 8.46e-5  \\
10 & 1.5 & 8 & 84K & 1.07e-4 & 9.10e-5 \\
10 & 1.5 & 10 & 101K & 1.33e-4 & 1.08e-4 \\
\bottomrule
\end{tabular}
\end{table}

\subsubsection{Helmholtz 2D ($a=20$) -- High Frequency}

\begin{table}
\centering

\caption{Architecture ablation on Helmholtz 2D ($a=20$). Top 10 configurations by L2 error.}
\label{tab:helmholtz_a20}
\begin{tabular}{ccc|r|cc}
\toprule
Hash & Scale & Lvls & Params & Best $L_2$ & Final $L_2$  \\
\midrule
16 & 2.0 & 8 & 4.21M & 7.93e-5 & 6.97e-5  \\
14 & 1.5 & 8 & 1.07M & 3.33e-4 & 2.65e-4  \\
14 & 2.0 & 6 & 805K & 3.37e-4 & 2.89e-4  \\
14 & 2.3 & 8 & 1.07M & 5.00e-4 & 4.40e-4  \\
12 & 2.0 & 8 & 281K & 5.53e-4 & 4.60e-4  \\
14 & 1.5 & 6 & 805K & 5.68e-4 & 4.57e-4  \\
14 & 2.0 & 8 & 1.07M & 6.38e-4 & 6.32e-4  \\
14 & 2.0 & 10 & 1.33M & 6.56e-4 & 5.42e-4  \\
14 & 1.8 & 8 & 1.07M & 6.64e-4 & 5.47e-4  \\
14 & 1.8 & 6 & 805K & 1.37e-3 & 1.21e-3  \\
\bottomrule
\end{tabular}
\end{table}

\begin{table}
\centering
\caption{Optimal configuration comparison across frequencies.}
\label{tab:helmholtz_comparison}
\begin{tabular}{c|ccc|r|cc}
\toprule
$a$ & Hash & Scale & Lvls & Params & Best $L_2$  \\
\midrule
5 & 16 & 2.0 & 8 & 4.21M & 6.52e-6 \\
10 & 14 & 2.0 & 8 & 1.07M & 1.81e-5  \\
20 & 16 & 2.0 & 8 & 4.21M & 7.93e-5 \\
\bottomrule
\end{tabular}
\end{table}

\paragraph{Analysis.}
\begin{itemize}
\item \textbf{Hash table size}: Higher frequencies ($a=20$) benefit more from larger hash tables ($2^{16}$), while medium frequencies ($a=10$) work well with $2^{14}$. This suggests high-frequency solutions have finer spatial structure requiring more hash capacity.
\item \textbf{Scale and levels}: Scale 2.0 with 8 levels consistently performs well across all frequencies. This combination provides good frequency coverage from coarse to fine scales.
\item \textbf{Speed-accuracy tradeoff}: At $a=10$, the optimal $2^{14}$ configuration achieves comparable accuracy to $2^{16}$ while being 4$\times$ smaller and faster (3.08 vs 3.40 ms/ep).
\end{itemize}

\subsection{Computational Scaling}
\label{app:speed}

We benchmark training speed across varying numbers of collocation points and hash sizes, with all timings measured on a single NVIDIA RTX 4090 GPU. For Table~\ref{tab:memory}, GradNorm is not applied, reducing backward-pass time; all other speed benchmarks include GradNorm.

\begin{table}
\centering
\caption{GPU memory comparison at 10K collocation points. Our method does not apply GradNorm, reducing backward-pass memory and computation. In contrast, I-NGP-FD relies on tiny-cuda-nn, which constrains the hash-to-level ratio and requires more levels when using larger hash tables, increasing memory usage.}

\label{tab:memory}
\begin{tabular}{l|r|c|c}
\toprule
Method & Params & Peak GPU (MB) & ms/epoch \\
\midrule
Hermite-NGP ($2^{10}$) & 68K & 133 & 1.82 \\
Hermite-NGP ($2^{12}$) & 264K & 133 & 1.99 \\
Hermite-NGP ($2^{14}$) & 1.05M & 154 & 1.86 \\
Hermite-NGP ($2^{16}$) & 4.20M & 195 & 2.01 \\
Hermite-NGP ($2^{18}$) & 16.8M & 389 & 3.62 \\
\midrule
INGP-FD ($2^{12}$, L=8) & 81K & 5.4 & 2.52 \\
INGP-FD ($2^{14}$, L=8) & 130K & 6.3 & 2.57 \\
INGP-FD ($2^{16}$, L=10) & 492K & 14.0 & 5.03 \\
INGP-FD ($2^{18}$, L=12) & 2.33M & 52.5 & 6.37 \\
INGP-FD ($2^{20}$, L=14) & 11.2M & 241.1 & 8.66 \\
\midrule
PIG (200 Gaussians) & 4K & 4,230 & 47.0 \\
PIG (400 Gaussians) & 8K & 8,390 & 96.1 \\
PIG (800 Gaussians) & 16K & 16,760 & 189.6 \\
PIG (1600 Gaussians) & 32K & 33,480 & 4994.3 \\
\bottomrule
\end{tabular}
\end{table}

\begin{table}
\centering
\caption{Computational scaling: time per epoch (ms) and component breakdown. Hash=16, 8 levels.}
\label{tab:speed}
\begin{tabular}{r|ccc|c|c}
\toprule
Collocation & PDE (ms) & BC (ms) & Backward (ms) & Total & Backward \% \\
\midrule
5K & 0.33 & 0.23 & 1.83 & 2.70 & 68\% \\
10K & 0.38 & 0.26 & 2.17 & 3.13 & 69\% \\
15K & 0.50 & 0.29 & 2.48 & 3.60 & 69\% \\
20K & 0.60 & 0.26 & 3.11 & 4.28 & 73\% \\
25K & 0.78 & 0.27 & 3.80 & 5.15 & 74\% \\
50K & 1.35 & 0.25 & 7.30 & 9.19 & 79\% \\
100K & 2.59 & 0.28 & 16.48 & 19.72 & 84\% \\
\bottomrule
\end{tabular}
\end{table}

\paragraph{Effect of Hash Table Size on Speed.}
Table~\ref{tab:speed_hash} shows timing at 10K collocation points for different hash sizes.

\begin{table}
\centering
\caption{Speed vs.\ hash table size at 20K collocation points.}
\label{tab:speed_hash}
\begin{tabular}{c|r|ccc|c}
\toprule
Hash & Params & PDE (ms) & BC (ms) & Backward (ms) & Total \\
\midrule
8 & 35K & 0.37 & 0.23 & 2.02 & 2.89 \\
10 & 84K & 0.40 & 0.26 & 2.47 & 3.42 \\
12 & 281K & 0.41 & 0.25 & 2.16 & 3.12 \\
14 & 1.07M & 0.38 & 0.26 & 2.17 & 3.13 \\
16 & 4.21M & 0.36 & 0.24 & 2.09 & 3.04 \\
18 & 16.8M & 0.38 & 0.23 & 2.40 & 4.42 \\
\bottomrule
\end{tabular}
\end{table}

\paragraph{Analysis.}
The backward pass dominates computation time, accounting for 68\% at 5K points and increasing to 84\% at 100K points. This is expected as gradient computation scales with batch size. The forward pass (PDE + BC) scales sub-linearly due to efficient hash table lookups. Hash table size has minimal impact on speed for sizes $2^8$ to $2^{16}$, with timing varying only from 2.89 to 3.04 ms/epoch at 10K points. Only at $2^{18}$ (16.8M params) does memory bandwidth become a bottleneck.

\paragraph{Derivative Computation Ablation.}
We compare three derivative computation strategies to validate our analytic derivative approach:
\begin{itemize}
\item \textbf{Ours (Full Analytic)}: CUDA Hermite encoding with analytic derivatives + CUDA MLP with analytic Laplacian
\item \textbf{Autograd-MLP}: CUDA Hermite encoding with analytic derivatives + PyTorch autograd for MLP Laplacian
\item \textbf{Fully-Autograd}: Standard NGP encoding + PyTorch autograd for all derivatives
\end{itemize}

\begin{table}
\centering
\caption{Derivative computation method comparison (hash=14, levels=8, hidden=128, layers=2). Use GradNorm}
\label{tab:deriv_ablation}
\begin{tabular}{r|ccc|c}
\toprule
Collocation & Ours (ms) & Autograd-MLP (ms) & Fully-Autograd (ms) & Speedup \\
\midrule
5K & 2.70 & 3.78 & 40.83 & 15.1$\times$ \\
10K & 3.13 & 4.68 & 42.07 & 13.4$\times$ \\
20K & 4.28 & 5.15 & 44.01 & 10.3$\times$ \\
50K & 9.19 & 10.70 & 48.77 & 5.3$\times$ \\
100K & 19.72 & 23.16 & 64.50 & 3.3$\times$ \\
\bottomrule
\end{tabular}
\end{table}

Our full analytic implementation achieves \textbf{3--15$\times$ speedup} over fully-autograd baselines (average 9.5$\times$). The Autograd-MLP hybrid is only 1.2--1.5$\times$ slower than our approach, demonstrating that \textbf{analytic encoding derivatives provide the majority of the speedup}. Speedup is highest at smaller batch sizes (15$\times$ at 5K) because autograd overhead is relatively constant while computation scales linearly.

\begin{table}[h]
\centering
\small
\caption{Interpolation ablation on Helmholtz 2D ($a=10$). Cubic uses smoothstep, Bicubic uses Catmull-Rom. All NGP variants fail.}
\label{tab:cubic_ablation}
\begin{tabular}{lcc}
\toprule
Method & Best $L_2$ & ms/epoch \\
\midrule
Hermite-NGP (ours) & \textbf{1.81e-5} & 3.08 \\
\midrule
Trilinear-NGP & 1.46e-1 & -- \\
Cubic-NGP & 2.78e-1 & 693.6 \\
Bicubic-NGP & 2.91e-1 & 683.3 \\
Bicubic-NGP (analytic) & 2.47e-1 & 470.1 \\
\bottomrule
\end{tabular}
\end{table}


\section{Additional Experiments}
\label{app:additional_experiments}

This appendix reports the supporting numbers for experiments summarized in the main text: multi-seed robustness (Table~\ref{tab:multiseed}), MLP depth ablation (Table~\ref{tab:depth}), 3D training-memory comparison vs.~INGP-FD (Table~\ref{tab:mem3d}), additional baselines on Helmholtz 2D ($a{=}10$) (Table~\ref{tab:helm_extra_baselines}), and image reconstruction (Table~\ref{tab:image_recon}). For image reconstruction, SIREN and $\partial^\infty$-Grid use Kairanda's percentile-rescale at evaluation (their convention); Hermite-NGP uses raw clip-to-$[0,1]$ since its Dirichlet anchor fixes the integration constant.

\begin{table}[h]
\centering
\small
\caption{Multi-seed ($5$ seeds) relative $L^2$ error mean $\pm$ std on the main benchmarks, using the default configuration (hash $2^{14}$, $L{=}8$, scale $2.0$, width $128$, depth $2$). Relative standard deviations stay below $\sim$15\% across all settings.}
\label{tab:multiseed}
\begin{tabular}{lc}
\toprule
Benchmark & Relative $L^2$ (mean $\pm$ std) \\
\midrule
Helmholtz 2D ($a{=}10$)  & $3.35{\times}10^{-5}\,\pm\,4.4{\times}10^{-6}$ \\
Helmholtz 2D ($a{=}20$)  & $1.07{\times}10^{-4}\,\pm\,3.64{\times}10^{-5}$ \\
Helmholtz 2D ($a{=}100$) & $7.34{\times}10^{-2}\,\pm\,6.06{\times}10^{-3}$ \\
Helmholtz 3D ($a{=}10$)  & $7.29{\times}10^{-3}\,\pm\,1.10{\times}10^{-3}$ \\
Taylor--Green ($\nu{=}0.01$) & $5.69{\times}10^{-5}\,\pm\,3.7{\times}10^{-6}$ \\
Convection ($c{=}30$)        & $9.47{\times}10^{-5}\,\pm\,2.5{\times}10^{-5}$ \\
\bottomrule
\end{tabular}
\end{table}

\begin{table}[h]
\centering
\small
\caption{MLP depth ablation on Helmholtz 2D ($a{=}20$, width $128$, hash $2^{14}$). $^*$ paper default.}
\label{tab:depth}
\begin{tabular}{c|cc}
\toprule
Depth & Best $L^2$ & ms/epoch \\
\midrule
$1$       & $1.02{\times}10^{-3}$ & $2.17$ \\
$2^{*}$   & $6.81{\times}10^{-4}$ & $3.07$ \\
$3$       & $4.27{\times}10^{-4}$ & $3.95$ \\
$4$       & $4.03{\times}10^{-4}$ & $4.86$ \\
\bottomrule
\end{tabular}
\end{table}

\begin{table}[h]
\centering
\small
\caption{Peak GPU training memory in 3D (Helmholtz 3D, $a{=}10$, $40$k collocation points). Hermite-NGP uses less peak memory than INGP-FD across the full range because INGP-FD requires $7$ forward passes (storing $7$ activation graphs) while Hermite-NGP keeps a single graph.}
\label{tab:mem3d}
\begin{tabular}{rr|rr}
\toprule
Hermite Params & Hermite Mem & INGP-FD Params & INGP-FD Mem \\
\midrule
$150$K  & $957$\,MB  & $117$K  & $1402$\,MB \\
$2.12$M & $988$\,MB  & $314$K  & $1405$\,MB \\
$8.41$M & $1084$\,MB & $4.77$M & $1502$\,MB \\
$33.6$M & $1465$\,MB & $21.0$M & $1783$\,MB \\
\bottomrule
\end{tabular}
\end{table}

\begin{table}[h]
\centering
\small
\caption{Additional baselines on Helmholtz 2D ($a{=}10$), all run under our unified protocol (Adam, cosine LR, identical collocation budget). Companion to Figure~\ref{fig:helm_a10_compare} in the main text.}
\label{tab:helm_extra_baselines}
\begin{tabular}{lc}
\toprule
Method & Relative $L^2$ \\
\midrule
Hermite-NGP (ours) & \textbf{$3.35\!\times\!10^{-5}$} \\
INGP-FD~\citep{huang2024efficient} & $3.66\!\times\!10^{-3}$ \\
$\partial^\infty$-Grid~\citep{kairanda2026dinf} & $6.07\!\times\!10^{-3}$ \\
SIREN~\citep{sitzmann2020implicit} & $6.67\!\times\!10^{-2}$ \\
Cubic I-NGP & $2.47\!\times\!10^{-1}$ \\
\bottomrule
\end{tabular}
\end{table}

\begin{table}[h]
\centering
\small
\caption{Image reconstruction from \emph{gradient} supervision on \texttt{camera}: best PSNR (dB) at two resolutions. Companion to Figure~\ref{fig:image_recon_compare} in the main text.}
\label{tab:image_recon}
\begin{tabular}{l|c|c}
\toprule
Method & $256{\times}256$ & $512{\times}512$ \\
\midrule
Hermite-NGP (ours)                              & \textbf{32.56} & \textbf{32.35} \\
$\partial^\infty$-Grid~\citep{kairanda2026dinf} & $32.47$        & $32.27$ \\
SIREN~\citep{sitzmann2020implicit}              & $32.47$        & $29.25$ \\
\bottomrule
\end{tabular}
\end{table}

\section{Additional Figures}
\label{app:additional_figures}


\begin{figure}[H]
    \centering
    \includegraphics[width=1.0\linewidth]{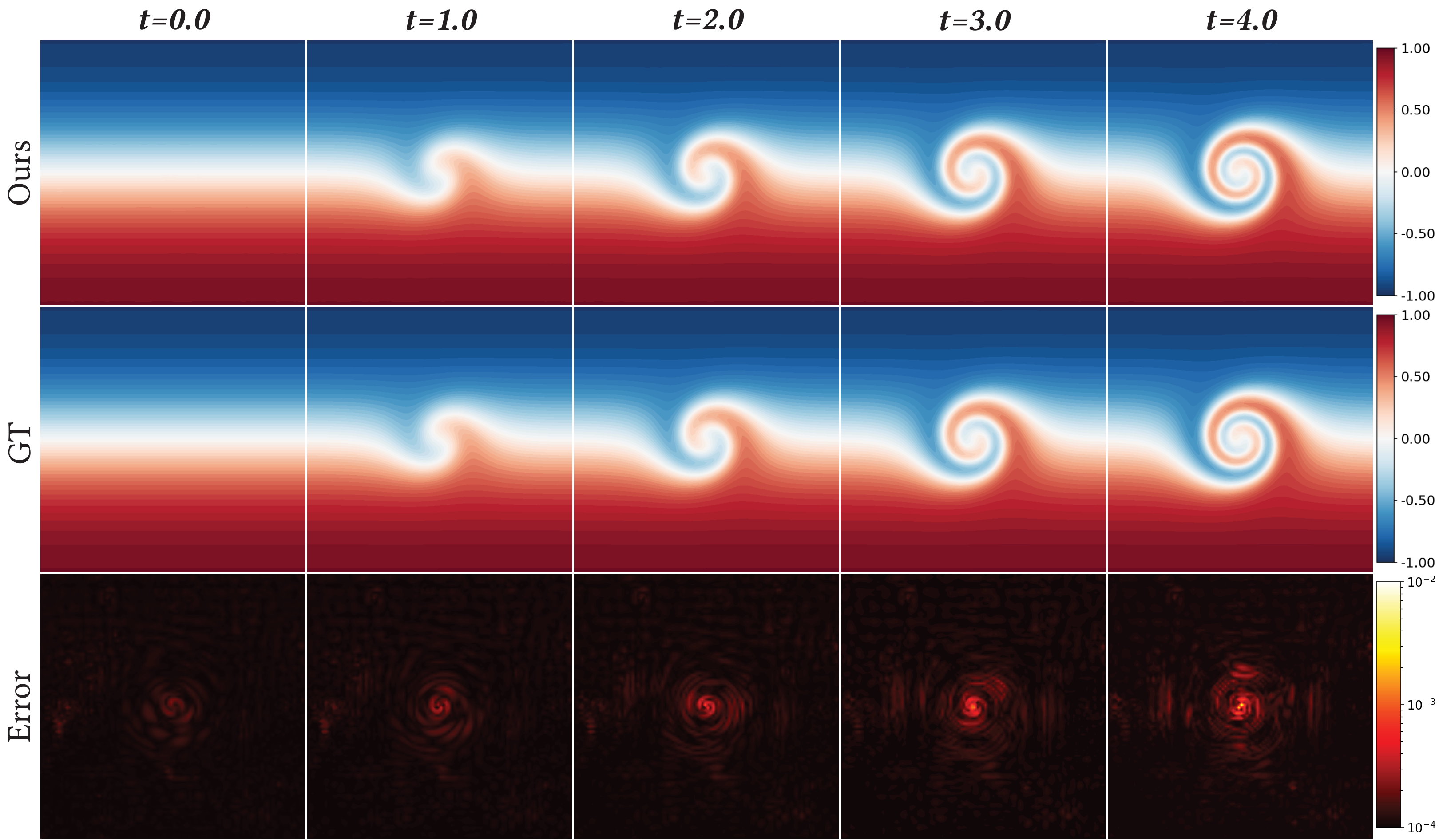}
    \caption{\textbf{Flow Mixing.} Solution snapshots at $t = 0, 0.2, 0.4, 0.6, 0.8, 1.0$. Hermite-NGP achieves average L2
   error of 2.35e-4 across all time steps.}
    \label{fig:flow_compare}
\end{figure}

\begin{figure*}
    \centering
    \includegraphics[width=1.0\linewidth]{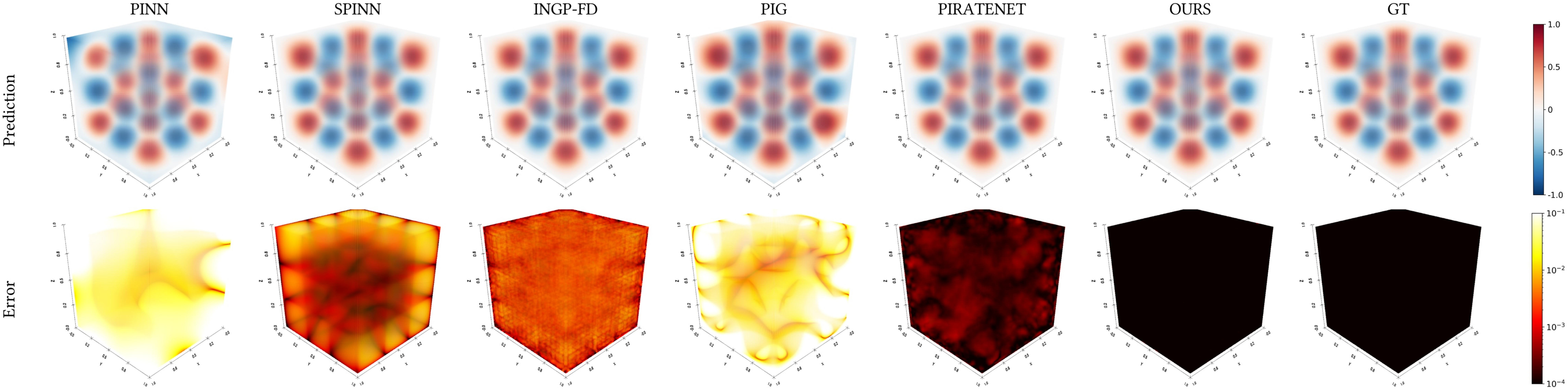}
    \caption{\textbf{Helmholtz 3D ($a=3$).} Cross-sectional slices of the 3D Helmholtz solution. Hermite-NGP achieves L2
  error of 6.09e-5, compared to 8.40e-4 for PirateNet (13.8$\times$ lower).}
    \label{fig:helm_3d_a3}
\end{figure*}

\begin{figure*}
    \centering
    \includegraphics[width=1.0\linewidth]{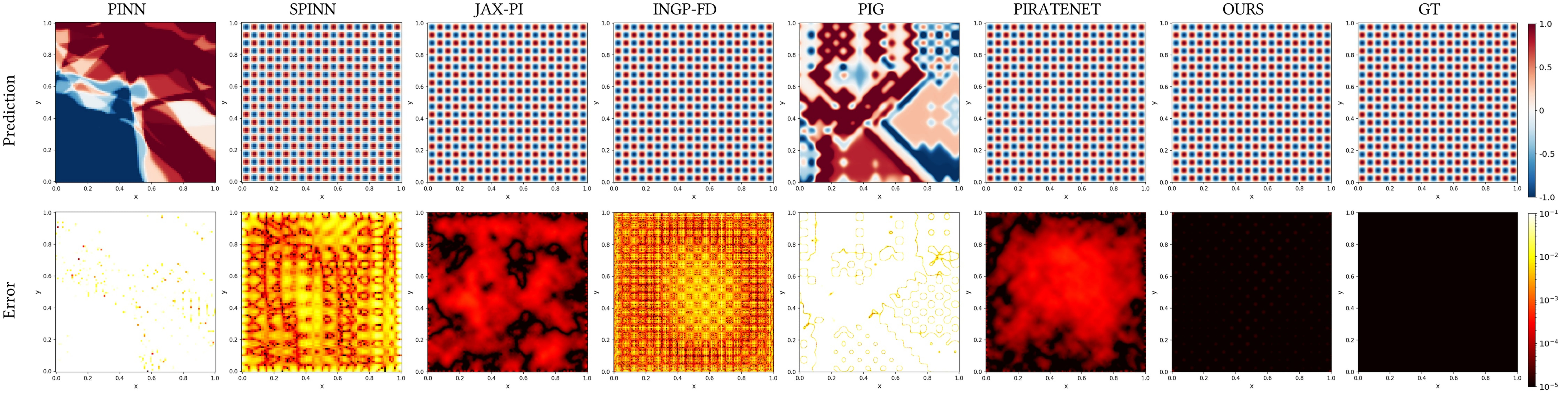}
    \caption{\textbf{Helmholtz 2D ($a=20$).} High-frequency solution with 20 wavelengths. Hermite-NGP achieves 9.87e-5
  (1.07M params) and 7.93e-5 (4.21M params), outperforming PirateNet (1.36e-3), JAX-PI (1.2e-3), INGP-FD (2.77e-3), and
  SPINN (3.30e-2).}
    \label{fig:helm_2d_20}
\end{figure*}

\begin{figure*}
    \centering
    \includegraphics[width=1.0\linewidth]{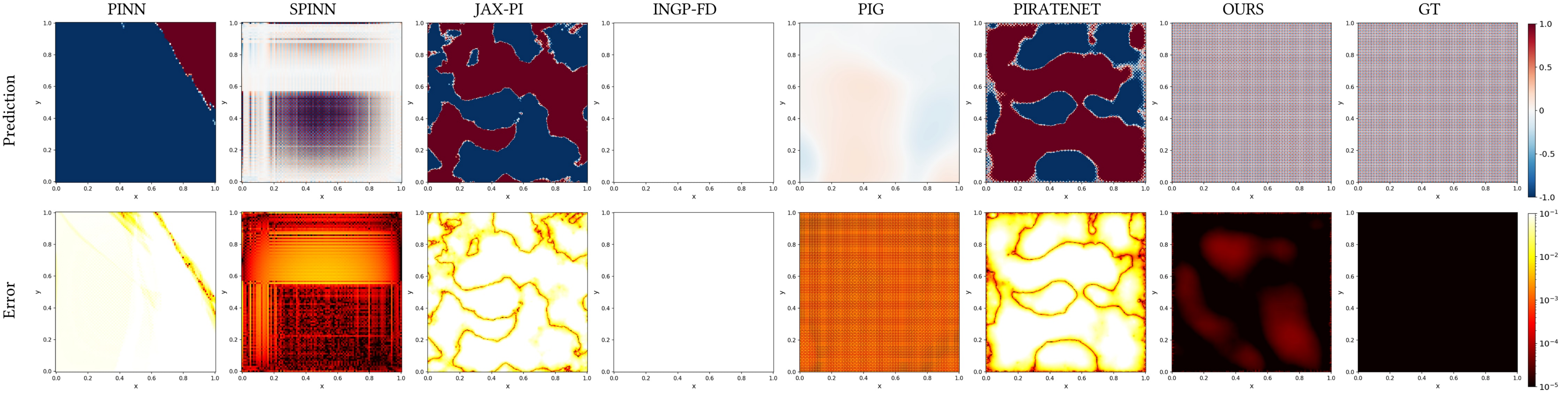}
    \caption{Helmholtz 2D ($a=100$): Hermite-NGP is the only method that converges on this challenging
  high-frequency setting, achieving $L^2 = 4.59{\times}10^{-2}$. All baseline methods fail to capture the rapid
  oscillations.}
    \label{fig:helm_2d_100}
\end{figure*}

\begin{figure}
    \centering
    \includegraphics[width=1.0\textwidth]{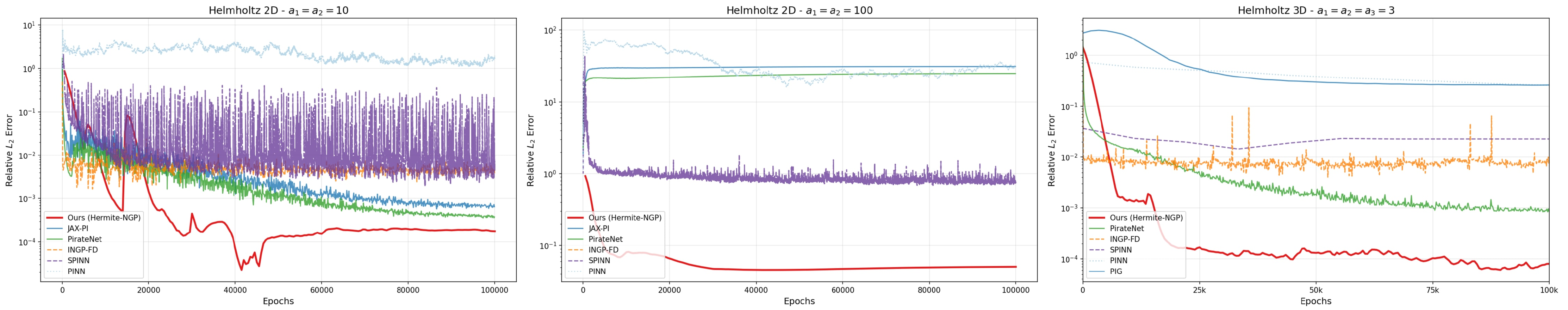}
    \caption{Training loss curves across three benchmarks. The horizontal axis indicates the number of epochs, while the vertical axis reports the relative $\ell_2$ training error. Our method exhibits a smoothly decreasing loss trajectory, in contrast to the strongly oscillatory behavior of the baselines, and converges to training errors up to \textbf{four orders of magnitude} smaller across the benchmarks; see Table~\ref{tab:main_results} for details.}
    \label{fig:loss_curves}
\end{figure}


\begin{figure}[H]
    \centering
    \includegraphics[width=1.0\linewidth]{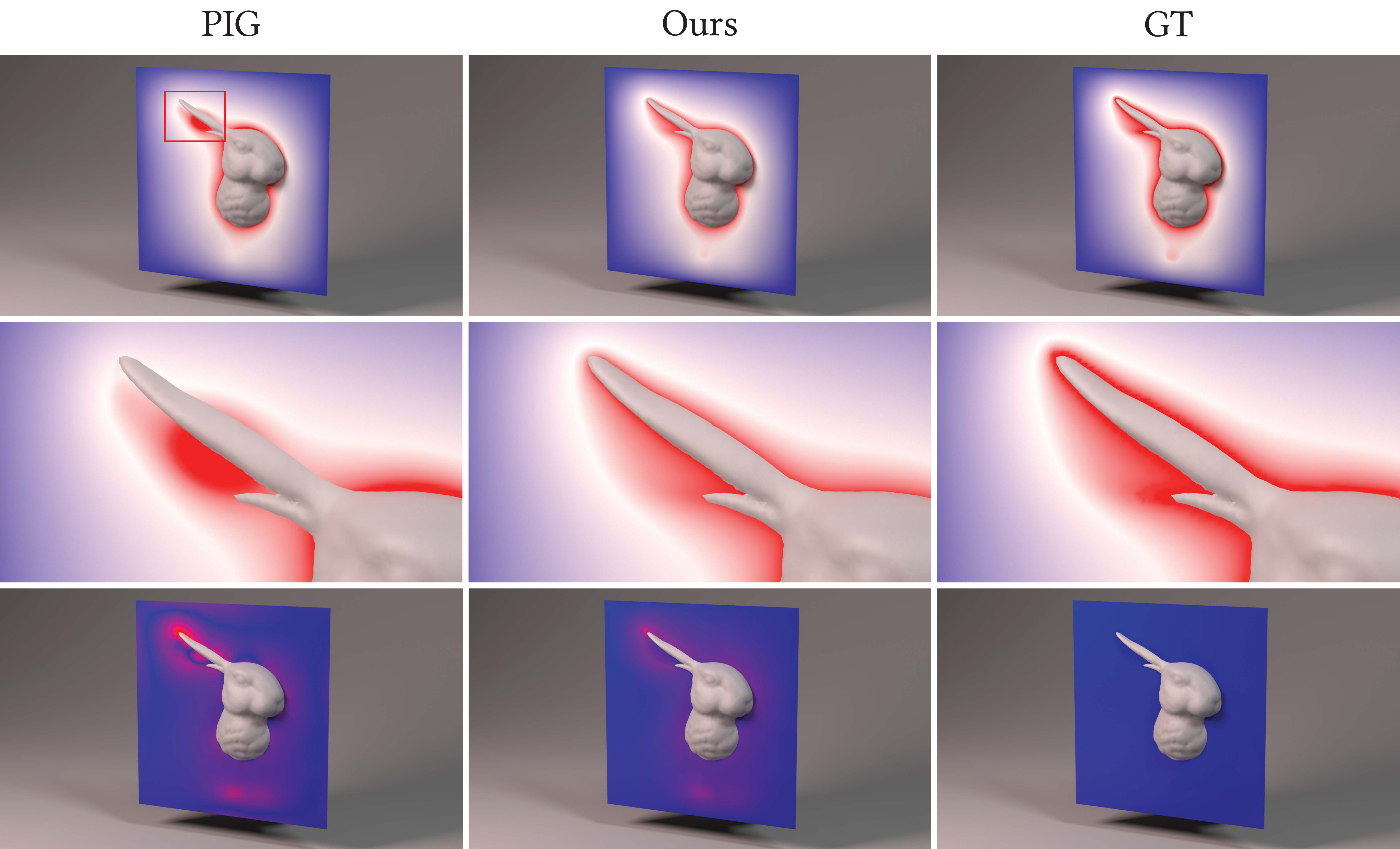}
    \caption{\textbf{Poisson 3D (Bunny).} Cross-sectional slice with mesh boundary and zoom-in detail. Hermite-NGP
  achieves MAE 0.0044 vs.\ PIG's 0.0127 (2.9$\times$ lower), with more accurate field near the boundary.}
    \label{fig:bunny_poisson}
\end{figure}

\begin{figure}[H]
    \centering
    \includegraphics[width=1.0\linewidth]{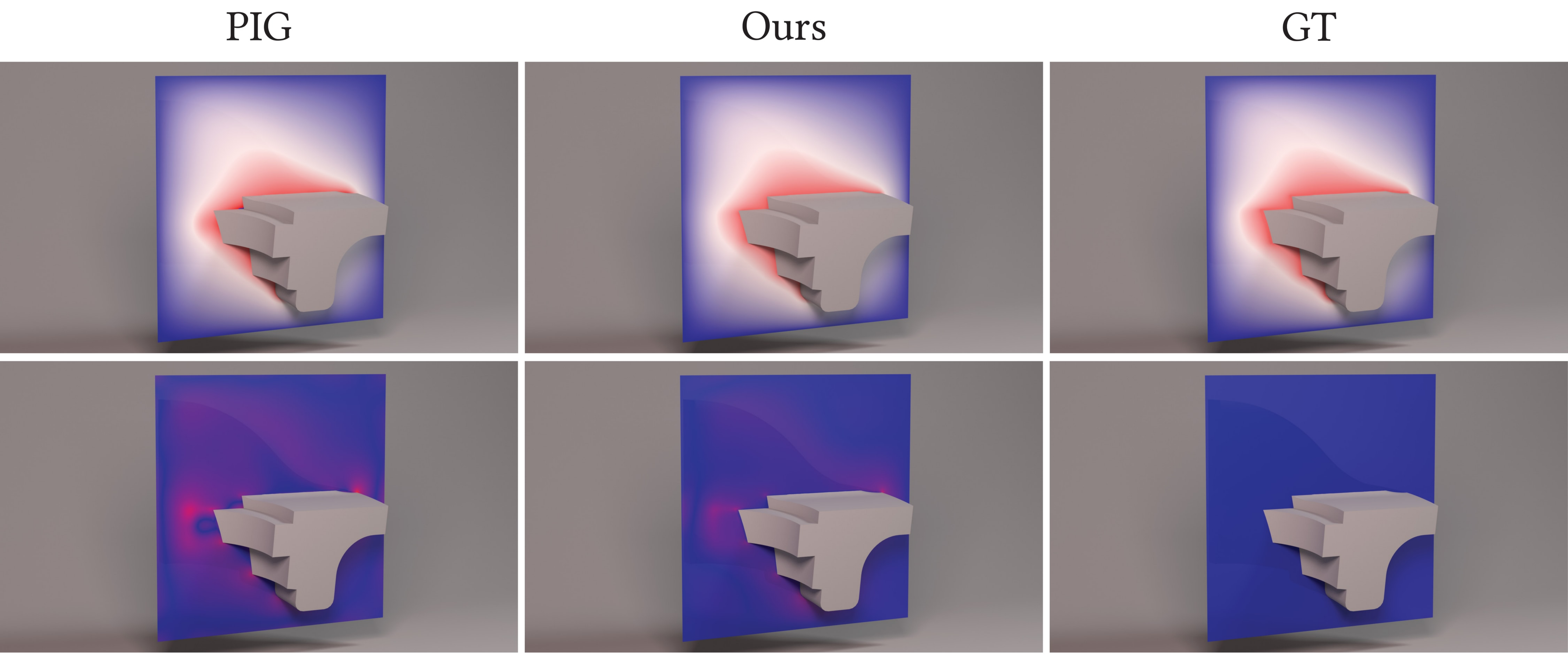}
    \caption{\textbf{Poisson 3D (Fandisk).} Cross-sectional slice with mesh boundary. Hermite-NGP achieves MAE 0.0031
  vs.\ PIG's 0.0100 (3.2$\times$ lower), accurately capturing sharp geometric features.}
    \label{fig:fandisk_poisson}
\end{figure}

\begin{figure}[H]
    \centering
    \includegraphics[width=1.0\linewidth]{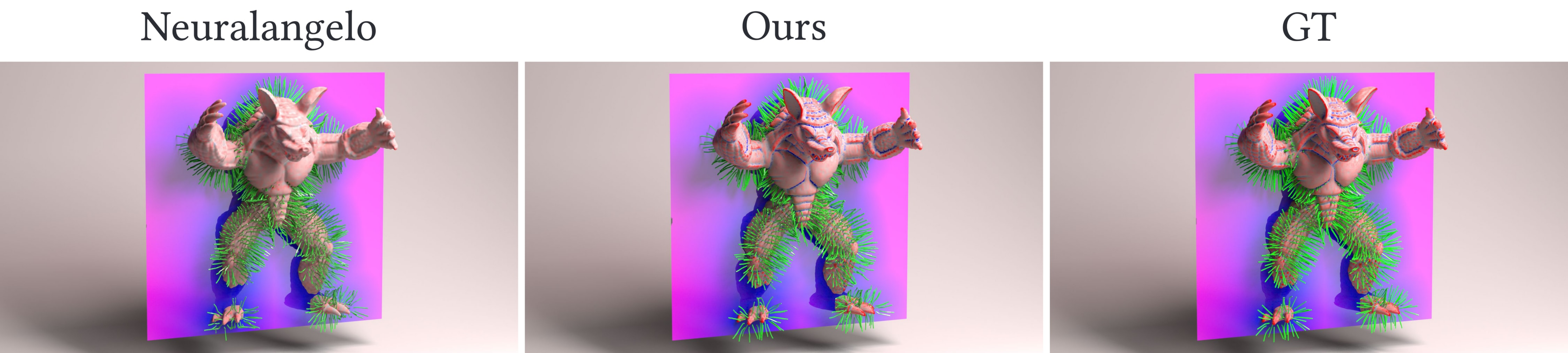}
    \caption{\textbf{SDF Curvature (Armadillo).} Mesh colored by mean curvature, green lines show gradient direction.
  Hermite-NGP achieves gradient MAE 0.0478 vs.\ NeuralAngelo's 0.1009 (2.1$\times$ lower).}
    \label{fig:arma_sdf_curvature}
\end{figure}

\begin{figure}[H]
    \centering
    \includegraphics[width=1.0\linewidth]{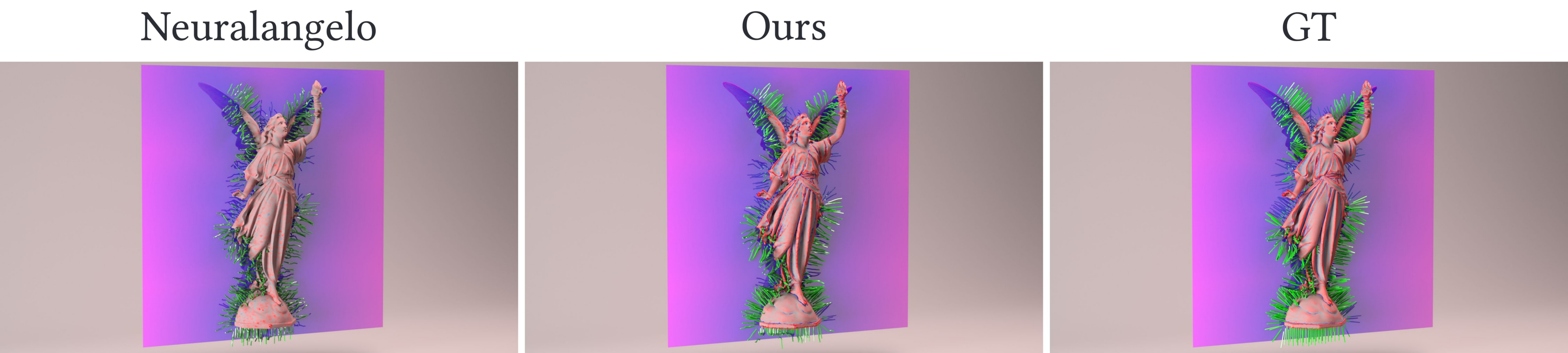}
    \caption{\textbf{SDF Curvature (Lucy).} Mesh colored by mean curvature, green lines show gradient direction.
  Hermite-NGP achieves gradient MAE 0.0418 vs.\ NeuralAngelo's 0.1213 (2.9$\times$ lower).}
    \label{fig:arma_sdf_curvature}
\end{figure}

\begin{figure}[H]
    \centering
    \includegraphics[width=1.0\linewidth]{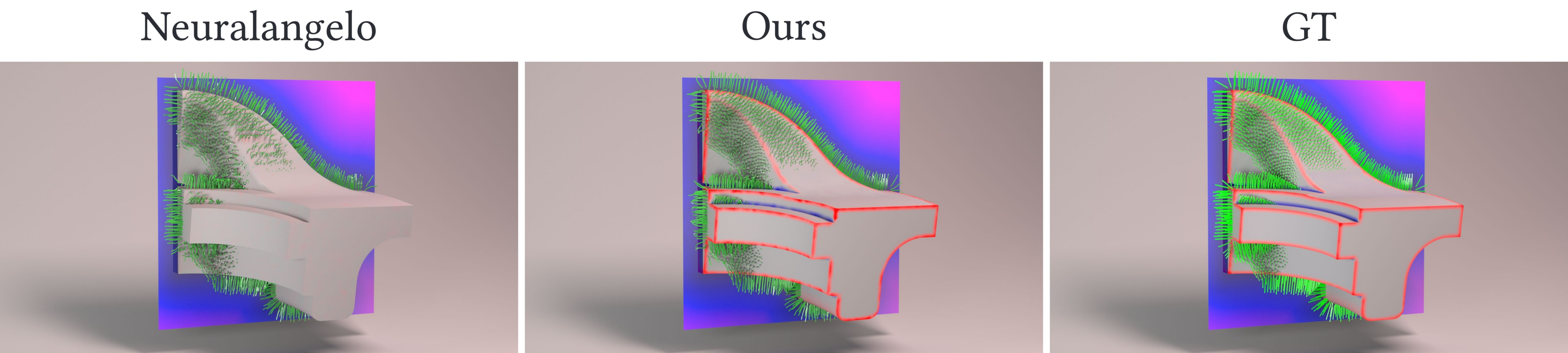}
    \caption{\textbf{SDF Curvature (Fandisk).} Mesh colored by mean curvature, green lines show gradient direction.
  Hermite-NGP achieves gradient MAE 0.0516 vs.\ NeuralAngelo's 0.1064 (2.1$\times$ lower), preserving sharp edges.}
    \label{fig:fandisk_sdf_curvature}
\end{figure}

\begin{figure}[H]
    \centering
    \includegraphics[width=1.0\linewidth]{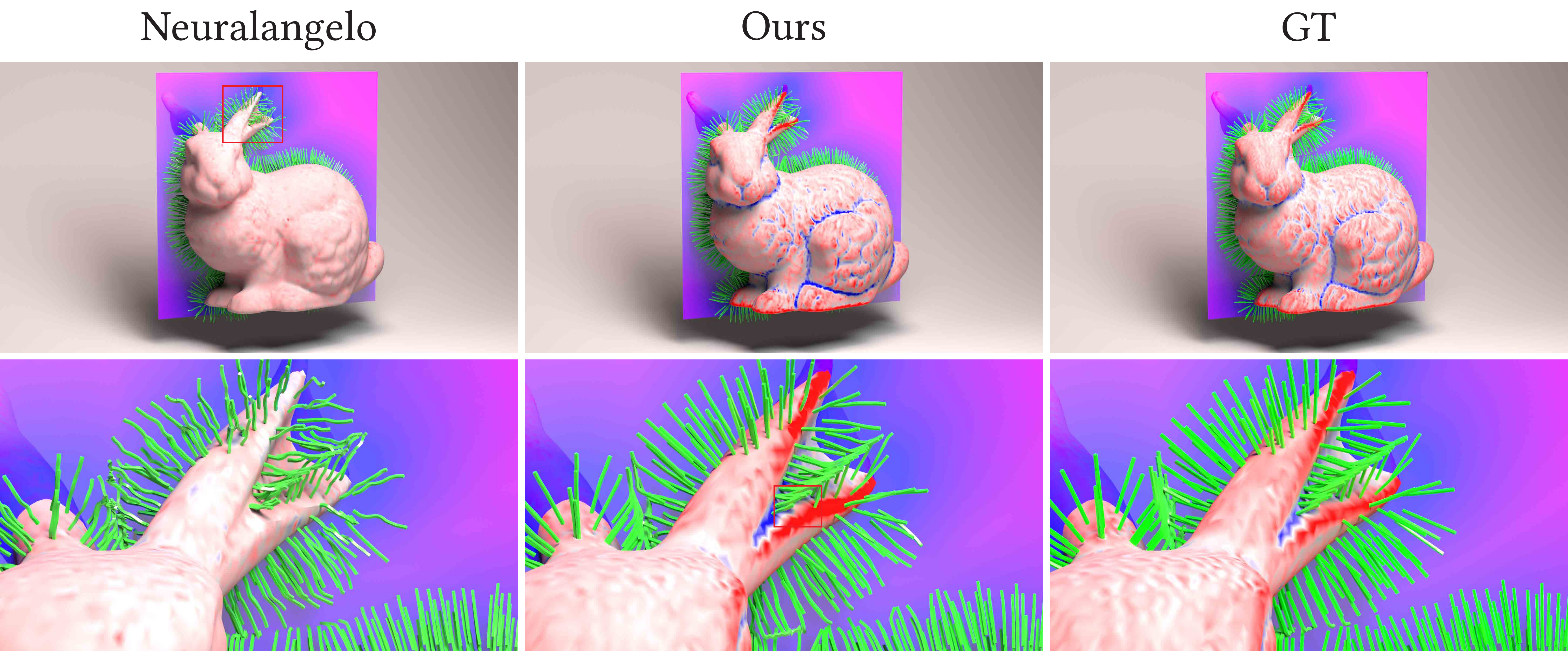}
    \caption{\textbf{SDF Curvature (Bunny).} Mesh colored by mean curvature, green lines show gradient direction.
  Hermite-NGP achieves gradient MAE 0.0416 vs.\ NeuralAngelo's 0.0887 (2.1$\times$ lower).}
    \label{fig:arma_sdf_curvature}
\end{figure}


\end{document}